\def\eqref#1{equation~\ref{#1}}
\def\1{\bm{1}}
\DeclareMathAlphabet{\mathsfit}{\encodingdefault}{\sfdefault}{m}{sl}
\SetMathAlphabet{\mathsfit}{bold}{\encodingdefault}{\sfdefault}{bx}{n}
\newcommand{\E}{\mathbb{E}}
\newcommand{\R}{\mathbb{R}}
\newcommand{\argmax}[1]{\underset{#1}{\text{argmax }}}
\newcommand{\minim}[1]{\underset{#1}{\text{min }}}
\newcommand{\maxim}[1]{\underset{#1}{\text{max }}}
\def \R { \mathbb R }
\def \bmb { \begin{bmatrix} }
\def \bme { \end{bmatrix} }
\newcommand{\cA}{{\mathcal{A}}}
\newcommand{\cD}{{\mathcal{D}}}
\newcommand{\cE}{{\mathcal{E}}}
\newcommand{\cF}{{\mathcal{F}}}
\newcommand{\cH}{{\mathcal{H}}}
\newcommand{\cL}{{\mathcal{L}}}
\newcommand{\cP}{{\mathcal{P}}}
\newcommand{\cS}{{\mathcal{S}}}
\newcommand{\cT}{{\mathcal{T}}}
\newcommand{\cX}{{\mathcal{X}}}
\newtheorem{definition}{Definition}
\title{Learning Causally Invariant Reward Functions from Diverse Demonstrations}
\author{
 Ivan Ovinnikov \\
  Department of Computer Science\\
  ETH Zürich \\
  \texttt{ivan.ovinnikov@inf.ethz.ch} \\
   \And
 Eugene Bykovets \\
  Department of Computer Science\\
  ETH Zürich \\
  \texttt{eugene.bykovets@inf.ethz.ch} \\
  \And
 Joachim M. Buhmann \\
  Department of Computer Science\\
  ETH Zürich \\
  \texttt{jbuhmann@inf.ethz.ch} \\
}
\begin{document}

\maketitle

\begin{abstract}
	Inverse reinforcement learning methods aim to retrieve 
	the reward function of a Markov decision process based on a dataset
	of expert demonstrations. 
        The commonplace scarcity and heterogeneous sources of such demonstrations
	can lead to the absorption of spurious correlations in the        
        data by the learned reward function. 
        Consequently, this adaptation often exhibits 
	behavioural overfitting to the expert data set
	when a policy is trained on the obtained reward function 
        under distribution shift of the environment dynamics.
        In this work, we explore a novel regularization approach
        for inverse reinforcement learning methods based 
        on the \emph{causal invariance} principle with the 
        goal of improved reward function generalization.
        By applying this regularization
	to both exact and approximate formulations of the learning task, 
        we demonstrate superior policy performance when trained using the 
        recovered reward functions in a transfer setting
\end{abstract}

\section{Introduction}
\vspace{-.2cm}

    
In the domain of reinforcement learning, the formulation of a suitable reward function plays a pivotal role in shaping the behaviour of decision making agents. This is commonly justified by the widely adopted belief that 
the reward function is a succinct representation of a task goal in a given environment specified as a Markov decision process (MDP) \citep{ng2000algorithms}.
Eliciting the correct behavioural policies via the optimization of a reward function 
is of paramount importance for the deployment of RL agents to real world domains
such as various robotics scenarios \citep{pomerleau1991efficient, billard2008survey} or expert behaviour forecasting \citep{kitani2012activity}. However, the challenge of designing such a function typically entails a cumbersome and error-prone process of handcrafting a heuristic reward signal that accounts for all the intricacies of the task at hand.


Inverse reinforcement learning (IRL) methods aim to solve the problem of inferring 
the reward function of an MDP based on a dataset of temporal behaviours. 
These trajectories are typically obtained from an agent that is assumed to demonstrate near-optimal performance in the respective MDP.
There are multiple benefits to learning an explicit reward function compared to alternatives such as behavioural cloning \citep{pomerleau1991efficient}
or other imitation approaches \citep{ho2016generative}
including the ability to transfer the reward function across problems and enhanced robustness to compounding errors \citep{swamy2021moments}.

The IRL problem is challenging due to a number of factors. 
The problem of recovering the reward from the statistics of  
expert trajectories is generally ill-posed, as there typically exist many reward functions
which satisfy the optimization constraints \citep{ng1999policy}. 
While this property is effectively tackled by regularization 
in the form of a maximum entropy objective \citep{ziebart2008maxent, ziebart2010modeling},
scaling up IRL to handle large-scale problems remains a challenge. In particular, it requires a variational formulation dependent on non-linear function approximation methods \citep{finn2016connection, fu2017learning}. Since the amount of available expert data is typically limited, this can lead to overfitting
phenomena, which are particularly pronounced in highly parameterized models such as neural networks \citep{ying2019overview, song2019observational}. 
An additional difficulty arises when there is a significant 
discrepancy between expert demonstrations originating from 
different experts. 
We show that pooling expert demonstrations in one dataset under 
the same label introduces spurious correlations which are absorbed in the representation of the reward function. By optimizing the expected cumulative reward defined by such functions, the agent might fail to learn meaningful behaviours 
due to optimization of the spurious correlations present in the reward model.

In order to circumvent this issue, we consider causal properties of the reward learning problem.
The causal invariance principle
\citep{peters2015icp, heinzedeml2017icp,arjovsky2019irm} studies the generalization problem of supervised learning models through the lens of causality. It postulates that the conditional distribution of the target variable must be asymptotically stable across samples obtained under observational and interventional settings of the data generating process. 
By only considering causally invariant representations of the input, 
this ensures avoiding the reliance on spurious correlations in the model predictions. We propose to adapt this principle for the context of reward function learning, where we aim to elicit behavioural policies without exploiting spurious reward features, i.e. features which would prevent the reward from providing a meaningful training signal under distribution shift.
To achieve this goal, we make the assumption that variations in expert demonstrations 
are a product of causal interventions on the data generating process of trajectories.
Under this assumption, the conditional distribution describing the optimality of a transition must be stable for experts demonstrations gathered 
on a specific task. By applying the causal invariance principle under this assumption, we show that we can recover reward functions which are invariant across a population of experts and demonstrate improve generalized w.r.t. certain types of distribution shift.

\textbf{Contributions.} Our contributions are as follows:
(i) we formulate the assumption that the variations between experts performing considered to perform near-optimally on the same task can be seen as interventional settings of the underlying trajectory distribution and
(ii) propose a regularization principle for inverse reinforcement learning methods based on the principle of causal invariance. This modelling choice allows us to learn reward functions that are invariant to spurious correlations between the transitions and optimality label present in the expert data. We (iii) demonstrate the efficacy of this approach in both tractable, finite state-spaces, which we refer to as the exact setting as well as large continuous state-spaces, which we denote as the approximate setting. 
In the first setting, we visually analyze the recovered reward functions and verify their invariance properties w.r.t. the input data.
In the second setting, we demonstrate improved ground truth performance when a policy is trained 
using the regularized reward in MDPs with perturbed dynamics.

\vspace{-.2cm}
\section{Method}
\vspace{-.2cm}
In this section, we describe our method. \Cref{sec:problem_setting} presents the problem setting of learning rewards in the maximum entropy IRL setting \citep{ziebart2008maxent} in both primal and dual form and reviews the connection to a class of adversarial optimization methods based on distribution matching. In \cref{sec:spurious_correlations_invariance}, we outline how spurious correlations arise 
in the context of the IRL problem and connect this to the causal invariance principle. \Cref{sec:ci_regularization} shows how to incorporate this principle as a regularization strategy for reward learning.
\subsection{Problem setting}
\label{sec:problem_setting}
We begin by giving an overview of the problem setting. We consider environments modelled by a \emph{Markov decision process} $(\cS, \cA, \cT, \mu_0, R, \gamma)$, where $\cS$ is the state
space, $\cA$ is the action space, $\mathcal{T}$ is the family of transition distributions on
$\cS$ indexed by $\cS \times \cA$ with $p(s'|s,a)$ describing the
probability of transitioning to state $s'$ when taking action $a$ in
state $s$, $\mu_0$ is the initial state distribution, 
$R: \cS \times \cA \to \R$ is the reward function and $\gamma \in (0,1)$ is the discount factor.
A policy $\pi:  \cS \times \cA \to \Omega(\cA)$\footnote{$\Omega(\cA)$ denotes the set of probability measures over the action space $\cA$} is
a map from states $s \in \cS$ to distributions $\pi(\cdot | s)$ over
actions, with $\pi(a|s)$ being the probability of taking action $a$ in
state $s$.

In absence of a given ground truth reward function, inverse reinforcement learning %
methods aim to estimate a suitable reward function based on a dataset of expert trajectories $\mathcal{D}_E = \{\xi_i\}_{i\leq K}$ where $\xi_i = (s_{1:T_i}^{(i)},a_{1:T_i}^{(i)})$ is a
sequence of states and actions of expert $i$ of length $T_i$. 
To achieve this goal, a number of methods based on distribution matching may be used. 
Such methods typically minimize a divergence measure \citep{csiszar1972class} between 
the expert trajectories and the trajectories induced by a policy optimizing 
the estimated reward.

We begin by presenting maximum entropy IRL (MaxEntIRL) \citep{ziebart2008maxent}, which serves as a foundation for most of these methods. In MaxEntIRL, a feature matching approach is
used to learn a reward function $r_{\psi,\varphi} = \psi^T\varphi(s)$, where the state features $\varphi(s)$ may be specified a priori or learned 
using a using a neural network model \citep{wulfmeier2015maximum}.
The policy is trained by optimizing the
expected cumulative reward using the reward function estimate.
The model describes the fact that 
the expert trajectories are sampled from a Gibbs distribution defined over trajectories:
\begin{align}
p(\xi|\psi,\varphi) = \frac{1}{Z_{\varphi,\psi}}\exp(r_{\psi,\varphi}(\xi)))
\label{eq:gibbs_pdf}
\end{align}

which corresponds to the solution of the entropy maximization 
problem of the trajectory distribution under feature matching 
and normalization constraints.  

In the more general case of stochastic dynamics  $p(s_{t+1} | s_t, a_t)$ and random initial state distribution $\mu_0$, we can define the \emph{generative model of trajectories} $p(\xi|O_{1:T})$ conditioned on the optimality variables $\{O_t\}_{t=1:T}$ as follows \citep{levine2018reinforcement}: 
\begin{align}
p(\xi|O_{1:T}) \propto p(\xi,O_{1:T})  =  \mu_0(s_0)\prod_{t=1}^T p(O_t=1|s_t,a_t) p(s_{t+1} | s_t, a_t)
\label{eq:pgm}
\end{align}
where the conditional distribution $p(O_t=1|s_t,a_t) \propto \exp(r_{\psi,\varphi}(s_t,a_t))$ encodes the optimality of a single timestep of the trajectory, i.e. $O_t=1$ corresponds to an expert-level transition.
The optimal reward weight solution $\psi^*$ can be obtained by maximizing the likelihood
of \cref{eq:pgm} w.r.t. the parameter $\psi$.
 We state the 
\emph{primal maximum-likelihood objective} \citep{ziebart2008maxent}: 
\begin{align}
    \maxim{\psi} \mathbb{E}_{\xi \sim \mathcal{D}_E} \left[\log \frac{1}{Z(\psi)}e^{\psi^T \varphi(\xi)}\right] = \maxim{\psi} \mathbb{E}_{\xi \sim \mathcal{D}_E}[\psi^T\varphi(\xi) - \log Z(\psi)]
    \label{eq:gibbs_mle}
\end{align}
\paragraph{Variational dual formulation. } 
Due to the difficulties of computing the log-partition function in high-dimensional spaces, a dual formulation of the maximum likelihood problem is derived. The Gibbs distribution over trajectories $p(\xi|\psi,\varphi)$
obtained via the maximum entropy principle belongs to the exponential family of distributions:
\begin{align}
p(\xi|\psi,\varphi) = p_0(\xi)\exp(\psi^T\varphi(\xi) - A(\psi))
\end{align}
where $A(\psi) = \log Z_\psi = \log \int_\Xi \exp(\psi^T\varphi(\xi))p_0(\xi)d\xi$ is the log-partition function defined over the space of trajectories $\Xi$ and $p_0$ is the base measure.  Leveraging the strict concavity of the log-partition function $A(\psi)$ in $\psi$ \citep{wainwright2008graphical}, the Fenchel-Legendre dual \citep{rockafellar2015convex} of $A(\psi)$ is given by 
\begin{align}  
A(\psi) = \maxim{\varphi \in \Phi} \langle \psi, \varphi(\xi) \rangle - D_{KL}(p(\xi|\psi,\varphi)||p_0(\xi)) = \maxim{q\in \cP} \langle q(\xi), g_{\psi,\varphi}(\xi) \rangle - D_{KL}(q(\xi)||p_0(\xi)) 
\label{eq:fenchel_dual_logpart}
\end{align}
Here, we use the generalization of the marginal polytope $\Phi$ over first-order statistics $\varphi(\xi)$ to the space of sampling distributions with bounded $L_2$-norm $\cP$ \citep{dai2019kernel} and $g_{\psi,\varphi}(\xi)=\psi^T \varphi(\xi)$. $q(\xi)$ describes a trajectory sampling distribution. In our case, the base measure $p_0$
corresponds to the Lebesgue or count measure according to the continuity of the state space, i.e. $p_0(\xi)=1$. Thus, $D_{KL}(q||p_0)$ simplifies to the negative entropy of the sampling distribution $\cH(q)$. By plugging in the resulting dual of the log-partition in \cref{eq:fenchel_dual_logpart} into the maximum likelihood objective in \cref{eq:gibbs_mle}, we obtain the dual saddle-point objective:
\begin{align}
\maxim{\psi,\varphi}\minim{q\in\mathcal{P}}\mathbb{E}_{\xi\sim\cD_E}[g_{\psi,\varphi}(\xi)] - \mathbb{E}_{\xi\sim q}[g_{\psi,\varphi}(\xi)] - \cH(q)
\label{eq:dual_fenchel}
\end{align}


\textbf{Connection to $f$-divergences. } The resulting dual problem is closely related to $f$-divergence \citep{csiszar1972class}
minimization which allows us to obtain a numerical solution strategy for \cref{eq:dual_fenchel}.
It is well known that the maximum-likelihood problem (\cref{eq:gibbs_mle}) is asymptotically equivalent to the minimization of 
the KL-divergence $D_{KL}(p(\xi|\theta^*)||p(\xi|\theta))$ \citep{andersen1970asymptotic}, where $\theta^*$ 
is the parameter vector maximizing the likelihood of $p(x|\theta^*)$.
The variational formulation of the more general $f$-divergence minimization problem, of which $D_{KL}$ is an instance, is given by:
\begin{align} 
D_f(p||q) = \mathbb{E}_q \left[f\left(\frac{p}{q}\right)\right] \geq \sup_{g:\cX \to \mathbb{R}}\mathbb{E}_p[g(\xi)] - \mathbb{E}_q[f^*(g(\xi))]
\label{eq:fdiv}
\end{align} 
where $f$ is a convex function and $f^*$ denotes its Fenchel conjugate. 
In particular, for $f(x) = \frac{1}{2}|x-1|$, we obtain the \emph{total variation distance} $D_{TV}(P,Q)$:
\begin{align}
    D_{TV}(p||q) = \sup_{||g||_\infty\leq1}\mathbb{E}_p[g(\xi)] - \mathbb{E}_q[g(\xi)]
\end{align}
which is equivalent to \cref{eq:dual_fenchel} for a restricted class of functions $g\in\{g:\cX\to\R,||g||_\infty\leq1\}$ and an entropy regularization term for the sampling distribution $q$. 

In order to perform the minimization of the $f$-divergence objective in \cref{eq:fdiv}, we can 
leverage a correspondence between optimal risk functions and $f$-divergences. In particular, for a given $f$-divergence $D_f$, there exists a corresponding decreasing convex risk function $\phi(\alpha)$ of the classification margin $\alpha$, such that the optimal 
risk $R_\phi = -D_f$\citep[Thm. 1]{nguyen2009surrogate}.
This correspondence allows the objective to be described as a two-player zero-sum game and is amenable to optimization using \emph{generative-adversarial-network}-like \citep{goodfellow2014gan,finn2016connection} frameworks.
When used to minimize the divergence between the expert and policy occupancy measures, this problem class includes many adversarial imitation learning algorithms
\citep{ho2016generative, fu2017learning, ni2021f}. We will use these algorithms
as solution strategies for \cref{eq:dual_fenchel}.

Now that we have both exact and approximate formulations 
of the IRL problem (\cref{eq:gibbs_mle}, \cref{eq:dual_fenchel}) we shall see how spurious correlations
can arise when the expectations over the datasets $\cD_E$ in \cref{eq:gibbs_mle}
and \cref{eq:dual_fenchel}
are evaluated over samples from diverse sources.
\subsection{Spurious correlations and causal invariance approaches}
\label{sec:spurious_correlations_invariance}

 We shall now outline the intuition as to what
we consider spurious correlations in inverse reinforcement learning. It is necessary, at this point, to introduce structural causal models, 
which will help define the notion of spurious correlations.
A structural causal model (SCM) \citep{pearl2009causality} is defined as a tuple $\mathfrak{G} = (S, P(\mathbf{\varepsilon}))$,
where $P(\varepsilon) = \prod_{i\leq K} P(\varepsilon_i)$ is a product distribution over exogenous latent variables $\varepsilon_i$ 
and $S = \{f_1,...,f_K\}$ is a set of structural mechanisms where $\text{pa}(i)$ denotes the set of parent nodes of variable $x_i$:
$x_i := f_i(\text{pa}(x_i),\varepsilon_i) \; \; \text{for } i \in |S|$.
$\mathfrak{G}$ induces a directed acyclic graph (DAG) over the variables nodes $x_i$.
The SCM entails a joint observational distribution $P_\mathfrak{G} = \prod_{i\leq K}p(x_i|\text{pa}(x_i))$ over variables $x_i$ conditioned
on the parents of $x_i$ for some probability distribution $p(\cdot|\text{pa}(x_i))$ describing the mechanism $f_i$.
Interventions on $\mathfrak{G}$ constitute modifications of structural mechanisms $f_i$ yielding interventional distributions $\tilde{P}_\mathfrak{G}$.
In the context of IRL, we consider interventional settings of the expert trajectory distribution $p(\xi|\psi, \varphi)$ in \cref{eq:pgm}.

In supervised learning, a correlation between the input representation and the label is considered spurious if it does not generalize under distribution shift, e.g. when the trained model is evaluated on a test set of examples sampled out-of-distribution. More specifically, the distribution shift constitutes an intervention on the causal parents of the target label, which allows the application of invariance based approaches, that we will describe below. In the case 
of IRL, we consider a correlation to be spurious when a reward function does not generalize to perturbations in the initial measure or dynamics of the MDP, i.e. the policy optimizing a reward that reflects this correlation will absorb this signal and fail to perform optimally under perturbed environment dynamics.

\paragraph{Transition SCM.} To illustrate scenarios in which such spurious correlations arise, we consider the structural causal model  of a transition in \Cref{fig:pgm}(a). At timestep $t$, the model is composed of the state $s_t$ , action $a_t$, next state $s_{t+1}$, the optimality label $O_t$ describing the optimality of the transition at timestep $t$, a variable $E$ encoding the setting index $e\in I(\mathcal{E}_{tr})$ and an unobserved latent variable $C$, which might vary as $E$ changes.
In this model, we would like to avoid non-causal information paths between the environment index $E$ and the optimality label. The presence of such paths corresponds 
to spurious correlations. We will now discuss specific scenarios in which such correlations arise and subsequently describe a way to mitigate them.

In \Cref{fig:pgm}b, we observe a general partitioning of an arbitrary transition input $(s,a,s')$ into 
the \emph{causal} transition feature components $\mathbf{x}^{(c)} $ and \emph{non-causal} transition feature components $\mathbf{x}^{(nc)}$ described by feature function $\varphi: \cS \times \cA \times \cS \to \R^d$.  
Here, conditioning on the $\mathbf{x}^{(nc)}$ \emph{collider} introduces a spurious correlation path.
Among other causes, this situation can arise due to selection bias caused by a prevalence of certain transitions in the pooled dataset, i.e., the pooling of \emph{diverse sources of demonstrations} under the binary optimality 
label introduces a spurious correlation between states visited by expert policies as a consequence of the respective preferences
and the binary optimality label.

A second scenario 
can be observed in \Cref{fig:pgm}c. This scenario requires the assumption that the orientation 
of the edge from node $O_t$ to node $s_{t+1}$ is temporally causal, meaning
that the optimality of a state $s_t$ at time $t$ is a causal parent of the next state.
In this case, observing the collider node $s_{t+1}$ makes nodes $E$ and $O_t$ conditionally dependent \footnote{Here, $\perp\!\!\!\perp$ denotes statistical independence}: $E \not\!\perp\!\!\!\perp O_t | s_{t+1}$. 
In \Cref{fig:pgm}d we can observe the scenario where we do not 
condition the representation of the optimality conditional on the action, which corresponds to the 
learning from observations modality \citep{zhu2020off}. By conditioning 
on the collider node $s_{t+1}$ and not observing 
the action node $a$, a \emph{backdoor path} \citep{pearl2009causality} is formed between the setting index $E$ and the optimality 
variable $O_t$, resulting in the violation of their conditional independence
relationship.


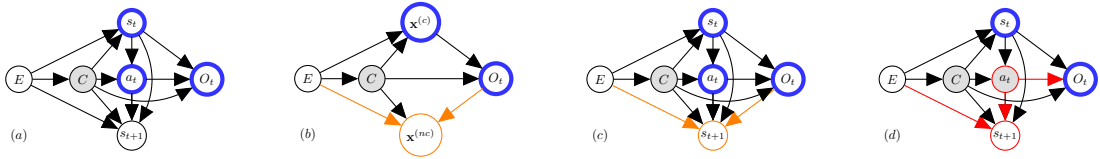
\begin{figure}[t!]

		\begin{minipage}{.23\linewidth}
		\centering
		\begin{tikzpicture}[scale=0.25, every node/.style={scale=0.5}]

			\node[circle, draw=blue!80, ultra thick,  ,yshift=0cm,xshift=6cm] (o1) {$O_t$};
			\node[latent, xshift=1cm,yshift=0cm] (e) {$E$};
			\node[xshift=1cm,yshift=-1.5cm] (lbl) {$(a)$};
			\node[obs, xshift=2.7cm,yshift=0cm] (c) {$C$};
			\node[circle, draw=blue!80, ultra thick, xshift=4cm,yshift=1.5cm] (s1) {$s_t$};
			\node[latent, xshift=4cm,yshift=-1.5cm] (s2) {$s_{t+1}$};
			\node[circle, draw=blue!80, ultra thick, xshift=4cm,yshift=-0cm] (a1_) {$a_t$};
			\edge {s1} {a1_}
			\edge {e} {c}
			\edge[bend left=45] {e} {s1}
			\edge[bend right=45] {e} {s2}
			\edge {s1,a1_} {o1}
			\edge {a1_} {s2}
			\edge {c} {s1,a1_,s2}
			\edge {a1_} {s2}
			\path (s1) edge [bend left,->]  (s2) ;
			\path (c) edge [bend right,->]  (o1) ;
			

		\end{tikzpicture}

	    \label{fig:transmodel_a}
		\end{minipage}
  \begin{minipage}{.23\linewidth}
		\centering
		\begin{tikzpicture}[scale=0.25, every node/.style={scale=0.5}]

			\node[circle, draw=blue!80, ultra thick,  ,yshift=0cm,xshift=6cm] (o1) {$O_t$};
			\node[latent, xshift=1cm,yshift=0cm] (e) {$E$};
			\node[xshift=1cm,yshift=-1.5cm] (lbl) {$(b)$};
			\node[obs, xshift=2.7cm,yshift=0cm] (c) {$C$};
			\node[circle, draw=blue!80, ultra thick, xshift=4cm,yshift=1.5cm] (xc) {$\mathbf{x}^{(c)}$};
			\node[circle, draw=orange, xshift=4cm,yshift=-1.5cm] (xnc) {$\mathbf{x}^{(nc)}$};
			\edge {e} {c}
			\edge[bend left=45] {e} {xc}
			\edge[bend right=45 ,draw=orange] {e} {xnc}
			\edge {xc} {o1}
			\edge {c} {xc,xnc,o1}
			\edge[draw=orange] {o1} {xnc}
			

		\end{tikzpicture}
	    \label{fig:transmodel_d}
		\end{minipage}
  		\begin{minipage}{.23\linewidth}
		\centering
		\begin{tikzpicture}[scale=0.25, every node/.style={scale=0.5}]

			\node[circle, draw=blue!80, ultra thick,  ,yshift=0cm,xshift=6cm] (o1) {$O_t$};
			\node[latent, xshift=1cm,yshift=0cm] (e) {$E$};
			\node[xshift=1cm,yshift=-1.5cm] (lbl) {$(c)$};
			\node[obs, xshift=2.7cm,yshift=0cm] (c) {$C$};
			\node[circle, draw=blue!80, ultra thick, xshift=4cm,yshift=1.5cm] (s1) {$s_t$};
			\node[latent, draw=orange, xshift=4cm,yshift=-1.5cm] (s2) {$s_{t+1}$};
			\node[circle, draw=blue!80, ultra thick, xshift=4cm,yshift=-0cm] (a1_) {$a_t$};
			\edge {s1} {a1_}
			\edge {e} {c}
			\edge[bend left=45] {e} {s1}
			\edge[bend right=45 ,draw=orange] {e} {s2}
			\edge {s1,a1_} {o1}
			\edge {a1_} {s2}
			\edge {c} {s1,a1_,s2}
			\edge {a1_} {s2}
			\edge[draw=orange] {o1} {s2}
			\path (s1) edge [bend left,->]  (s2) ;
			\path (c) edge [bend right,->]  (o1) ;
			

		\end{tikzpicture}
	    \label{fig:transmodel_c}
		\end{minipage}
		\begin{minipage}{.23\linewidth}
		\centering
		\begin{tikzpicture}[scale=0.25, every node/.style={scale=0.5}]

			\node[circle, draw=blue!80, ultra thick,  ,yshift=0cm,xshift=6cm] (o1) {$O_t$};
			\node[latent, xshift=1cm,yshift=0cm] (e) {$E$};
			\node[xshift=1cm,yshift=-1.5cm] (lbl) {$(d)$};
			\node[obs, xshift=2.7cm,yshift=0cm] (c) {$C$};
			\node[circle, draw=blue!80, ultra thick, xshift=4cm,yshift=1.5cm] (s1) {$s_t$};
			\node[latent, draw=red, xshift=4cm,yshift=-1.5cm] (s2) {$s_{t+1}$};
			\node[obs, draw=red, xshift=4cm,yshift=-0cm] (a1_) {$a_t$};
			\edge {s1} {a1_}
			\edge {e} {c}
			\edge[bend left=45] {e} {s1}
			\edge[bend right=45, draw=red] {e} {s2}
			\edge {s1} {o1}
			\edge[draw=red] {a1_} {o1}
			\edge {a1_} {s2}
			\edge {c} {s1,a1_,s2}
			\edge[draw=red] {a1_} {s2}
			\path (s1) edge [bend left,->]  (s2) ;
			\path (c) edge [bend right,->]  (o1) ;
			

		\end{tikzpicture}
	    \label{fig:transmodel_b}
		\end{minipage}

		\caption{(a) Probabilistic graphical model of a transition under influence of the index variable $E$ and latent variable $C$. 
		The invariant conditional is highlighted in blue. (b) General setting where $O_t$ depends on causal $\mathbf{x}^{(c)}$ and non-causal $\mathbf{x}^{(nc)}$ features of the transition.
            Conditioning on the collider variable (in orange) creates a spurious correlation path. 
		(c) Collider conditioning assuming wrong edge orientation $O_t \to s_{t+1}$. This corresponds to $O_t$ being the causal parent of $s_{t+1}$
		(d) Spurious correlations arising under assumption of state-only formulation of the reward. Since $a_t$ is unobserved, a backdoor path (in red) is formed.}
		\label{fig:pgm}

\end{figure}

\vspace{-.2cm}
\paragraph{Dataset partitioning and training settings.} The standard IRL procedure typically pools input demonstration data into one dataset which may lead to absorption of these correlations by the learned reward. To resolve this, we first need to make the assumption that the 
data is partitioned according to different sources obtained
from observational and interventional settings of the trajectory distribution. 
This assumption is valid in three different scenarios: (i) expert demonstrations 
were gathered on different environment dynamics, (ii) initial states were sampled from different initial state distributions or (iii) experts have reward preferences, i.e. optimize a perturbed version of the reward associated with the true task goal. In the context of the current work, we consider the source diversity to be a consequence of the individual expert 
preferences expressed as reward functions parameterizing \cref{eq:gibbs_pdf}. 
We denote the union set of samples obtained from these settings as $\mathcal{E}_{tr}$, the set of \emph{training settings}, and assume access to multiple such settings $\cD_e := \{(\xi_i)\}_{i=1}^{|\mathcal{E}_{tr}|}$ during training. 

\paragraph{From robustness to invariance.}
The consideration of multiple source distributions and associated target noise warrants the application of robust methods such as distributionally robust optimization (DRO)
\citep{namkoong2016stochastic, bashiri2021distributionally, viano2021robust}.
Such approaches modify the loss objective by searching over the space of empirical distributions indexed by $e\in\cE_{tr}$ under which the expected loss $\mathcal{L}_e$
describing the problem objective is maximized. This is implemented in practice by searching over the set of training settings $\mathcal{E}_{tr}$ resulting in a min-max objective for some function class $f\in\cF$:
\begin{align}
	\minim{f\in\cF} \maxim{e\in\mathcal{E}_{tr}} \E_{\xi\sim\mathcal{D}_e} \mathcal{L}_e(f,\xi)
\end{align}
This effectively regularizes the model by optimizing based on its \emph{worst-case} performance. While this modification can tackle the issue of model overfitting to scarce data, it does not address potential diversity of the 
data due to mode-seeking behaviour of the min-max problem 
\citep{rahimian2019distributionally}. This behaviour describes the fact that the maximum "latches on" to the training setting with the largest likelihood 
loss which might constitute a spurious mode of the demonstrations with respect to the actual task goal. 
\citeauthor{arjovsky2019irm} show that the solution of robust regression is a first order stationary point of the weighted square error (in the case of convex losses), if the variance of the loss is used as a per-setting bias \citep[Prop. 2]{arjovsky2019irm}. This limits the generalization capacity to the convex hull of the training settings $\mathcal{E}_{tr}$. 

Following \citep{arjovsky2019irm}, we consider causal properties of the robust estimation problem for the purposes of improved generalization outside of the convex hull of training settings. 
The \emph{causal invariance principle} postulates that for the problem of estimating a target conditional, e.g. classification label, one should only consider variables which belong to the set of causal parents of the target. More specifically, causally invariant covariates must yield the same conditional distribution of the target in both observational and interventional settings of the SCM. 
Lifting the distributional restrictions of the methods described in \citep{peters2015icp}, the invariant risk minimization (IRM) principle \citep{arjovsky2019irm} aims to identify a causally invariant data representation $\varphi$ by instantiating a bi-level optimization problem for a representation function $\varphi$ and a predictor function $w$:
\begin{equation}
\begin{split}
	\underset{\varphi: \mathcal{X} \to \mathcal{H}, w:\mathcal{H}\to\mathcal{Y} }{{\text{min}}}
	\sum_{e\in\mathcal{E}_{tr}} \mathcal{L}^e(w \circ \varphi) 
	\; \text{ s.t.: } \; w \in \underset{\bar{w}:\mathcal{H}\to\mathcal{Y}}{\text{argmin }} \mathcal{L}^e(\bar{w} \circ \varphi) 
	\quad \forall e \in \mathcal{E}_{tr}
\end{split}
\label{eq:irm}
\end{equation}
This objective admits an unconstrained relaxation using the gradient norm penalty
$\mathbb{D}(w=1.0, \varphi, e) = ||\nabla_{w|w=1.0} \mathcal{L}^e(w \circ \varphi)||^2$ which quantifies
the optimality of a fixed predictor ($w=1.0$) at each setting $e$ \citep{arjovsky2019irm}.
This leads to the following unconstrained formulation of the learning problem:
\begin{align}
	\underset{\varphi: \mathcal{X} \to \mathcal{Y}}{\text{min}}
	\sum_{e\in\mathcal{E}_{tr}} \mathcal{L}^e(\varphi) + \lambda ||\nabla_{w|w=1.0} \mathcal{L}^e(w \circ \varphi)||^2
    \label{eq:irm_relax}
\end{align}

In the following section, we will show how to incorporate this principle 
into the IRL setting.


\subsection{Reward regularization using causal invariance}
\label{sec:ci_regularization}
Mapping the objective in \cref{eq:irm} to the context of reward learning, we consider data gathered by different experts to correspond 
to interventional settings of the trajectory distribution in \cref{eq:pgm}, where the interventions reflect the varying 
preferences exhibited by the policy of the respective experts. The stable conditional we would like to identify 
corresponds to the conditional distribution of the optimality label $P(O_t|s_t,a_t)$. To do so, we invoke the causal invariance principle to learn
reward functions which utilize features which are invariant to some class of deviations exhibited by the experts.
We motivate this by the fact that despite the discrepancies
in the demonstrations, all experts
are assumed to perform the task in an optimal fashion with respect to the 
true task goal.
This implies that all experts, at least in part, optimize the same underlying reward that we would like 
to recover.
In doing so, we hope to extract succinct descriptions of the underlying agreed intentions of the experts as reward functions. As a result, we expect such rewards 
to be more readily applicable to MDPs with distribution shift of the dynamics.
As a next step, we show how to apply this principle into practice by introducing 
the causal invariance (CI) regularization, instantiated as the IRM penalty described in \cref{eq:irm_relax}.



\textbf{Feature matching regularization.} We begin by considering the maximum entropy feature matching problem. For the Gibbs distribution $p(\xi|\psi,\varphi) = 
\exp(\psi^T \varphi(\xi)) / Z_{\varphi,\psi}$ over 
trajectories, we can write down the constrained optimization problem analogously to \cref{eq:irm}:
\begin{align}
	\maxim{\varphi, \psi} \sum_{e\in\mathcal{E}_{tr}} \sum_{\xi\in\mathcal{D}_e} \log p(\xi|\psi,\varphi) 
	\text{ s.t. } \psi \in \argmax{\bar{\psi} }\sum_{\xi\in\mathcal{D}_e} \log p(\xi|\bar{\psi},\varphi)  
    \label{eq:irm_irl_constrained}
\end{align}

Intuitively, due to convexity of the likelihood function w.r.t. the natural parameter $\psi$
\citep{wainwright2008graphical}, this corresponds to penalizing the deviation $\psi$ in $p(\xi|\psi, \varphi)$ from
the optimal parameter $\psi^*$ which maximizes the likelihood $p(\xi|\psi^*, \varphi)$.

In an analogous fashion to the IRM approach described above (\cref{eq:irm}), we propose to relax the constrained optimization problem by defining a regularization term  $\mathbb{D}(\psi,\varphi,e)$ which describes this deviation. Here, the expected loss $\cL^e$ corresponds to the primal maximum likelihood loss of the Gibbs distribution over trajectories.
\begin{definition}
Let $\cE_{tr}$ be the set of training settings and $\psi,\varphi$ be the parameters of the likelihood $p(\xi|\psi,\varphi)$. $\mathbb{D}(\psi,\varphi,e)$ is a distance function representing 
the violation of the constraints of \cref{eq:irm_irl_constrained} in training setting $e\in\cE_{tr}$ w.r.t. the optimal solution.
\begin{align}
\mathbb{D}(\psi,\varphi;e) = ||\nabla_{\psi|\psi=1.0} \mathcal{L}^e(\psi, \varphi)||^2
\end{align}
\label{eq:def}
\end{definition}
\vspace{-.3cm}
In our case, the deviation is computed w.r.t. the parameters maximizing the likelihood
of the Gibbs distribution due to convexity of $\cL_{MLE}$ w.r.t. function $g_{\psi,\varphi}$.  
Applying the gradient of this penalty to the representation function $\varphi$ effectively 
regularizes the representation to minimize the constraint violations.
In simple tabular MDPs, where the computation of the partition function is tractable, 
we directly apply this regularizer to the primal maximum likelihood objective as follows:
\begin{align}
	\cL_{MLE}(\psi, \varphi, e) = \maxim{\varphi, \psi} \sum_{e\in\mathcal{E}_{tr}} \left(\mathbb{E}_{\xi\in\mathcal{D}_e}\left[\log \left(\frac{1}{Z_{\psi,\varphi} }\exp(\psi^T\varphi(\xi))\right) + \lambda \mathbb{D}(\psi,\varphi,e)\right]\right)
 \label{eq:me_primal}
\end{align}

In the primal case, for a trajectory distribution described by an exponential family, we can derive a closed form of the gradient penalty.
We summarize this result as the following proposition \footnote{All omitted proofs are found in \cref{sec:appA}}:
\begin{restatable}{proposition}{propone}
Let the likelihood $p(\xi)$ belong to a natural exponential family with parameter $\psi$, sufficient statistics $\varphi(x)$ and the (Lebesgue) base measure $p_0$. 
Let $\cD_E^e$ be the dataset corresponding to interventional setting $e$. Then, 
for all $e\in\cE_{tr}$, the causal invariance penalty for the maximum likelihood loss is the norm of the sufficient statistics expectation difference:
  \begin{align}\mathbb{D}(\psi,\varphi;e) = ||\nabla_{\psi|\psi=1.0} \mathcal{L}^e(\psi, \varphi)||^2 = 
||\mathbb{E}_{\cD_E^e}[\varphi(\xi)] - \mathbb{E}_{p(\xi|\psi)}[\varphi(\xi))]||^2
\label{eq:maxentpenalty}
\end{align}  
\label{prop1}
\end{restatable}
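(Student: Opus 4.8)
The plan is to compute the gradient of the maximum-likelihood loss directly from the exponential-family form and then evaluate its squared norm. Recall that for a natural exponential family the likelihood of a single trajectory is $p(\xi \mid \psi, \varphi) = p_0(\xi) \exp(\psi^T \varphi(\xi) - A(\psi))$ with $A(\psi) = \log \int_\Xi \exp(\psi^T \varphi(\xi)) p_0(\xi)\, d\xi$, so the per-setting loss $\mathcal{L}^e(\psi,\varphi) = \mathbb{E}_{\xi \sim \cD_E^e}[\psi^T \varphi(\xi) - A(\psi)]$ (up to sign/normalization conventions matching \cref{eq:gibbs_mle}). First I would write out $\nabla_\psi \mathcal{L}^e(\psi,\varphi) = \mathbb{E}_{\xi \sim \cD_E^e}[\varphi(\xi)] - \nabla_\psi A(\psi)$.

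The key classical fact I would invoke is the standard moment identity for exponential families: $\nabla_\psi A(\psi) = \mathbb{E}_{p(\xi \mid \psi)}[\varphi(\xi)]$, which follows by differentiating under the integral sign in the definition of $A(\psi)$ (this is exactly the strict-convexity / cumulant-generating-function property of $A$ cited via \citep{wainwright2008graphical} earlier in the paper). Substituting gives $\nabla_\psi \mathcal{L}^e(\psi,\varphi) = \mathbb{E}_{\cD_E^e}[\varphi(\xi)] - \mathbb{E}_{p(\xi \mid \psi)}[\varphi(\xi)]$. Evaluating this at the fixed dummy predictor $\psi = 1.0$ as prescribed by \Cref{eq:def} and taking the squared Euclidean norm yields precisely the claimed expression $\mathbb{D}(\psi,\varphi;e) = \|\mathbb{E}_{\cD_E^e}[\varphi(\xi)] - \mathbb{E}_{p(\xi \mid \psi)}[\varphi(\xi)]\|^2$, i.e. the squared norm of the difference between empirical and model feature expectations — the familiar MaxEnt feature-matching residual.

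The main obstacle — really the only subtlety — is justifying the interchange of differentiation and integration needed for $\nabla_\psi A(\psi) = \mathbb{E}_{p(\xi \mid \psi)}[\varphi(\xi)]$. I would handle this by noting that $A$ is finite and smooth on the interior of the natural parameter space (the standard regularity condition for exponential families), which licenses dominated-convergence-type interchange; since the paper already assumes strict concavity/convexity of $A$ via \citep{wainwright2008graphical}, this regularity is available and I would simply cite it rather than reprove it. A secondary bookkeeping point is matching the sign and averaging conventions between $\cL^e$ here and the primal objective in \cref{eq:gibbs_mle}: the squared norm is insensitive to the overall sign, so the penalty is unchanged regardless of whether $\cL^e$ is written as a log-likelihood to be maximized or its negative. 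I would conclude by remarking that this recovers the classical interpretation of the MaxEnt gradient, so the CI penalty in the exact setting is computable in closed form whenever the partition function (hence $\mathbb{E}_{p(\xi\mid\psi)}[\varphi(\xi)]$) is tractable, as in tabular MDPs.
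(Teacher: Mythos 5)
Your proposal is correct and follows essentially the same route as the paper's proof: differentiate the exponential-family log-likelihood in $\psi$ and apply the moment-generating property $\nabla_\psi \log Z_{\psi,\varphi} = \mathbb{E}_{p(\xi|\psi)}[\varphi(\xi)]$ before taking the squared norm. Your added remarks on differentiating under the integral sign and on sign conventions are fine but go beyond what the paper bothers to justify.
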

This closed form of the gradient norm penalty can be
utilized in the maximum causal entropy solver \citep{ziebart2010modeling}.
We assume the state features to be the output of a neural network 
according to the \textsc{DeepMaxEnt} model
\citep{wulfmeier2015maximum}. In order for the network to adopt invariant features, the gradient norm penalty in \cref{eq:maxentpenalty} is used to update the feature network using backpropagation
\footnote{We derive a closed form of the gradient estimate in \Cref{sec:appD}}.
The resulting algorithm (\textsc{CI-FMIRL}) is presented in \Cref{algo:fm_irl}.

\begin{algorithm}[tb]
   \caption{CI regularized Feature Matching IRL (CI-FMIRL)}
   \label{algo:fm_irl}
\begin{algorithmic}
   \STATE {\bfseries Input:} Expert trajectories $\mathcal{D}^e_E$ assumed to be obtained from multiple experts \emph{by intervening on $p(\xi|\psi,\varphi)$}\;
   \STATE 	{\bfseries Init:} Initialize reward estimate $r_\psi$ and state feature network $\varphi_\theta$\;
   \FOR{setting $e$ in $\{1,...,\mathcal{E}_{tr}\}$}
      \WHILE{$r_{\psi,\varphi}$ not converged}
        \STATE Compute feature matching gradient $\nabla_\psi\cL(\psi,\varphi;e) = 
        \mathbb{E}_{\cD_E^e}[\varphi(\xi)] - \mathbb{E}_{p(\xi|\psi)}[\varphi(\xi)]$ and \emph{causal invariance} penalty gradient $\nabla_\varphi\mathbb{D}(\psi,\varphi;e)$ and backpropagate the weighted sum through feature network $\varphi_\theta(s)$
        \STATE Compute policy $\pi_{r_{\psi, \varphi}}$ using value iteration on the reward estimate $r_{\psi, \varphi}$
       \ENDWHILE
   \ENDFOR
    \STATE 	{\bfseries Return:} Trained reward $r_{\psi, \varphi}$\;
\end{algorithmic}
\end{algorithm}

\textbf{Penalty for dual formulation.} In large scale MDPs, where the evaluation
of the log-partition function is intractable, we use the variational dual objective
outlined in \cref{eq:fenchel_dual_logpart}.     
We will now show how to apply the same regularization to the variational dual formulation of the problem outlined in \cref{eq:fenchel_dual_logpart}.
In order to do so, we first need to derive the causal invariance penalty for the dual formulation. 
By leveraging the strict concavity of $\cref{eq:dual_fenchel}$ w.r.t. $q$,
we can straightforwardly extend the distance penalty to the dual formulation as follows:
\begin{align}
\cL_{dual}(\psi,\varphi,q,e)  = \maxim{\psi,\varphi}\sum_{e\in\mathcal{E}_{tr}}\minim{q}[\mathbb{E}_{\xi\sim\cD_E^e}[g_{\psi,\varphi}(\xi)] - \mathbb{E}_{\xi\sim q}[g_{\psi,\varphi}(\xi)] + \cH(q)] + \lambda \mathbb{D}(\psi,\varphi,e)
\label{eq:me_dual}
\end{align}

The numerical solution strategy for the saddle-point objective in \Cref{eq:me_dual} 
can be realized using a two-player min-max game implemented using a GAN-like framework \citep{finn2016guided}. More specifically, an equivalence between the maximum likelihood problem for a Gibbs distribution over trajectories and the corresponding variational approximation has been shown in \citep[App. A]{fu2017learning}. We state the CI gradient penalty as a result of the following proposition.
\begin{restatable}{proposition}{proptwo}
The gradient of the primal exponential family maximum-likelihood problem in \Cref{eq:gibbs_mle} w.r.t. the natural parameter $\psi$ is equivalent 
to the gradient of the dual in \Cref{eq:fenchel_dual_logpart} w.r.t the parameter $\psi$ when the density ratio $\frac{q}{p}$ is unity. 
\[
||\nabla_\psi\cL_{dual}(\psi,\varphi,q,e)||_2 = ||\minim{q} \mathbb{E}_{\xi\sim\cD_E}[\varphi(\xi)] - \mathbb{E}_{\xi\sim p(\xi|\psi, \varphi)}\left[\frac{q(\xi)}{p(\xi|\psi, \varphi)} \varphi(\xi)\right]||_2
\]
\label{prop2}
\end{restatable}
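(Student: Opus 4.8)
The plan is to reduce the claim to two elementary steps: (i) a direct differentiation of the dual saddle-point objective in the natural parameter $\psi$, for which the envelope theorem lets us ignore the implicit dependence of the inner optimum on $\psi$, and (ii) a change-of-measure identity that rewrites the expectation of $\varphi$ under the sampling distribution $q$ as an importance-weighted expectation under the Gibbs distribution $p(\xi|\psi,\varphi)$. Chaining the two yields the displayed formula, and the reduction to the primal penalty of \cref{prop1} is then just the special case in which the density ratio $q/p$ equals one.

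First I would recall how $\cL_{dual}$ arises: substituting the Fenchel--Legendre form of $A(\psi)$ from \cref{eq:fenchel_dual_logpart} into the primal likelihood \cref{eq:gibbs_mle} turns the inner $\max_q$ inside $A(\psi)$ --- which enters with a negative sign --- into a $\min_q$, so that
\begin{align*}
\cL_{dual}(\psi,\varphi,q,e) = \min_{q\in\cP} \left( \mathbb{E}_{\xi\sim\cD_E^e}[g_{\psi,\varphi}(\xi)] - \mathbb{E}_{\xi\sim q}[g_{\psi,\varphi}(\xi)] - \cH(q) \right),
\end{align*}
with $g_{\psi,\varphi}(\xi)=\psi^{T}\varphi(\xi)$ and $\cH(q)=-D_{KL}(q||p_0)$ for the Lebesgue/count base measure $p_0$ (the sign convention chosen for the entropy term in \cref{eq:me_dual} plays no role in what follows, since that term is independent of $\psi$). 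Because $g_{\psi,\varphi}$ is affine in $\psi$ and $\cH(q)$ carries no $\psi$-dependence, differentiating the bracket at a fixed $q$ gives $\mathbb{E}_{\xi\sim\cD_E^e}[\varphi(\xi)]-\mathbb{E}_{\xi\sim q}[\varphi(\xi)]$; when $q$ is the inner minimizer $q^{\star}(\psi)$, the same expression equals $\nabla_\psi\cL_{dual}$ by Danskin's theorem, since $\partial_q\cL_{dual}=0$ at $q^{\star}$, which exists and is unique by strict convexity of $D_{KL}(\cdot||p_0)$ (equivalently strict concavity of $A(\psi)$, \citep{wainwright2008graphical}).

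Next I would apply the change of measure. Since $p(\xi|\psi,\varphi)\propto p_0(\xi)\exp(\psi^{T}\varphi(\xi))$ is strictly positive on $\Xi$, every $q\in\cP$ is absolutely continuous with respect to it, whence $\mathbb{E}_{\xi\sim q}[\varphi(\xi)] = \mathbb{E}_{\xi\sim p(\xi|\psi,\varphi)}\!\left[\frac{q(\xi)}{p(\xi|\psi,\varphi)}\,\varphi(\xi)\right]$. Substituting and taking norms reproduces exactly the right-hand side of the asserted identity, with the outer $\min_q$ recording that the gradient is evaluated at the inner optimum (or, in the GAN-type solver of \cref{eq:me_dual}, at the current best response of the sampling player). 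Imposing $q/p\equiv 1$ then collapses the second term to $\mathbb{E}_{p(\xi|\psi,\varphi)}[\varphi(\xi)]=\nabla_\psi A(\psi)$ --- the exponential-family moment identity already invoked in \cref{prop1} --- so that the expression coincides with $\nabla_\psi\cL_{MLE}$ read off from \cref{eq:gibbs_mle}; this is the claimed equivalence. For the matching of the two objectives themselves (rather than merely their gradients) I would cite \citep[App. A]{fu2017learning} rather than reprove it.

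The step I expect to be the main obstacle is the rigorous invocation of the envelope/Danskin argument: one must establish that the inner $\min_q$ over $\cP$ (the $L_2$-bounded sampling distributions of \citep{dai2019kernel}) is attained at a unique $q^{\star}$ depending on $\psi$ regularly enough to exchange differentiation and minimization, and to account for the approximate regime where the inner player returns only an approximate best response. I would handle this by appealing to strict concavity of $A(\psi)$ on the interior of its domain together with closedness of the feasible set, and by noting that the importance-weighted form of the gradient holds for \emph{any} admissible $q$ --- only the density-ratio-unity reduction requires $q$ to be the exact optimum --- so the penalty $\mathbb{D}(\psi,\varphi,e)$ of \cref{eq:me_dual} remains well-defined in either case.
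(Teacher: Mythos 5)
Your proposal is correct and follows essentially the same route as the paper's proof: differentiate the dual objective in $\psi$, use the envelope (Danskin) argument together with the $\psi$-independence of the entropy term, rewrite $\mathbb{E}_{q}[\varphi(\xi)]$ via the importance-sampling change of measure, and observe that strict concavity in $q$ places the inner optimum at $q=p(\xi|\psi,\varphi)$, i.e.\ unit density ratio. Your added care about the attainment and regularity conditions for Danskin's theorem, and the explicit closing step identifying $\mathbb{E}_{p(\xi|\psi,\varphi)}[\varphi(\xi)]=\nabla_\psi A(\psi)$ with the primal gradient, only make the same argument more rigorous.
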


Using this result, we can apply the gradient penalty to the dual.
Effectively, the resulting gradient estimate requires an importance sampling estimate of the second expectation using the sampling trajectory distribution $q$ induced by the policy.
The minimum over $q$ is attained when the importance weight is unity.

As we have seen in \cref{sec:problem_setting}, the dual objective is closely related to the $f$-divergence minimization
objectives which form the basis of multiple \emph{adversarial imitation learning} algorithms \citep{ho2016generative, fu2017learning, ni2021f}. This class of algorithms leverages the correspondence between the $f$-divergence objectives and equivalent 
binary classification surrogate losses as described in \citep{nguyen2009surrogate}. One such example is the optimal logistic loss of a binary classification function $g_D(x) = \frac{p(x)}{p(x)+q(x)}$ which corresponds to the Jensen-Shannon divergence \citep{ho2016generative} 
between distributions $p(x)$ and $q(x)$:
\begin{align}
\maxim{g_D} \cL_{BCE}(g_D) = \maxim{g} \E_{x \sim \cD_E}[\log g_D(x)] + \E_{x \sim \pi}[\log (1-g_D(x))] = D_{JS}(p || q) 
\label{eq:lbce}
\end{align}
In practice, we can make use of this equivalence in order to apply the penalty defined in \cref{eq:def} solely to the discriminator function $g_D(x)$ which classifies transitions sampled from the expert datasets $\cD_E^e$ and transitions obtained from the imitation policy.

\begin{algorithm}[b]
   \caption{CI regularized Adversarial IRL (CI-AIRL)}
   \label{algo:ci-airl}
\begin{algorithmic}
   \STATE {\bfseries Input:} Expert trajectories $\mathcal{D}^e_E$ assumed to be obtained from multiple experts \emph{by intervening on $p(\xi|\psi,\varphi)$}\;
   \STATE 	{\bfseries Init:} Initialize actor-critic $\pi_\theta, \nu_\vartheta$ and discriminator $g_{\xi,\varphi}$\;

   \FOR{setting $e$ in $\{1,...,\mathcal{E}_{tr}\}$}
   \STATE Collect trajectory buffer $\mathcal{D}_\pi = \{\xi_i\}_{i \leq |\mathcal{D}_\pi|}$ by executing the policy $\pi_\theta$\;
        
   \STATE 	Update $g_{\varphi,\theta}(s,a)$ via binary logistic regression by maximizing\;
			\[ \mathcal{L}(\varphi, \psi; e) =  \mathcal{L}_{\text{BCE}}(\xi,\varphi, \psi;e)
			+ \lambda ||\nabla_{\psi|\psi=1.0} \mathcal{L}_{\text{BCE}}(\xi, \varphi, \psi; e)||^2\]
            using dataset tuple $(\cD_E^e,\cD_\pi)$
    
    \STATE Update actor-critic $(\pi_\theta, \nu_\vartheta)$ w.r.t. the reward function of the \emph{regularized discriminator} 
    using the \emph{soft-actor-critic} RL procedure\;
   \ENDFOR
    \STATE 	{\bfseries Return:} Trained reward $r_{\varphi, \psi}$ and actor-critic $\pi_\theta, \nu_\vartheta$\;
\end{algorithmic}
\end{algorithm}

\textbf{Algorithm description. } We present the resulting adversarial algorithm (\textsc{CI-AIRL})
in \Cref{algo:ci-airl}. 
The algorithm describes a two-player zero-sum game between an agent parameterized using 
a soft-actor-critic architecture and the discriminator which is used to provide a reward function by distinguishing between expert and policy samples. The adversarial training procedure generally mimics that of divergence-based methods such as \citep{ho2016generative, fu2017learning}. There are three main differences compared to baseline adversarial training algorithms. The first is the fact that we use multiple experts in a
distinct fashion as opposed to pooling the demonstrations into one big dataset. The second is the regularization of the discriminator objective in \Cref{eq:lbce} using the gradient norm penalty $\mathbb{D}(\xi,\varphi,\psi;e) = ||\nabla_{\psi|\psi=1.0} \mathcal{L}_{\text{BCE}}(\xi, \varphi, \psi; e)||^2$ in a similar fashion to 
\cref{eq:irm}, where $\psi=1.0$ corresponds to a fixed scalar predictor.
Finally, we utilize soft-actor-critic (SAC) \citep{haarnoja2018soft} as an instance of an off-policy forward RL solution method as opposed to the on-policy algorithms used in \citep{ho2016generative, fu2017learning}.


\vspace{-.2cm}
\section{Experiments}
\vspace{-.2cm}
We will now evaluate the proposed method empirically. The experiments are designed to answer the following questions:
(i) What is the effect of the causal invariance penalty on the recovered reward structure?
(ii) Does regularizing the reward function using causal invariance improve downstream policy performance?
(iii) What is the impact of changing the loss function of the discriminator and (iv) increasing the perturbation magnitude? 
To answer these questions, we evaluate our model in two settings.

The first setting considers a tabular gridworld scenario, where the partition
function is tractable. We perform reward learning experiments using 
variants of the maximum entropy feature expectation matching algorithm.
In particular, we use the
\textsc{DeepMaxEnt} \citep{wulfmeier2015maximum} model of state features with different regularization strategies. 
In the second setting, we test the invariance regularization 
in an adversarial IRL setting on simulated robotic locomotion environments. 
Here, we demonstrate the generalization
of the obtained reward functions by retraining policies using the recovered rewards on perturbed versions of the environment dynamics.
\vspace{-.2cm}
\subsection{Tractable setting: Gridworld experiments using feature matching}
\vspace{-.2cm}
\label{sec:gwexp}
\begin{figure}[htp]
    \centering
    \includegraphics[width=\textwidth]{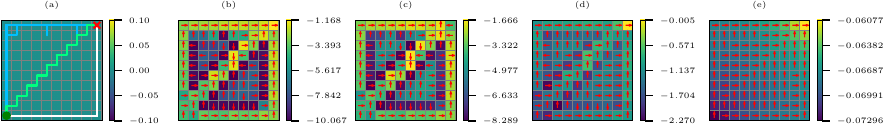}

    \caption{Feature matching reward recovery on a gridworld environment. (a) expert trajectory datasets: 1st group (blue) 400 trajectories, 2nd group (white): 25 trajectories, 3rd group (green): 3 trajectories. (b) MaxEnt IRL ERM baseline (c) MaxEnt IRL ERM baseline with L2 regularization coefficent $\lambda_{L2} = 1e-3$ (d) MaxEnt IRL with CI penalty, $\lambda_{I}=0.01$, (e) MaxEnt IRL with CI penalty, $\lambda_{I}=0.05$}
    \label{fig:gw_1}
\end{figure}

For the first experiment, we aim to illustrate the principle of causal invariance regularization in 
the tractable IRL setting using \Cref{algo:fm_irl}.
To do so, we choose a simple gridworld environment with stochastic dynamics and a sparse ground truth reward structure and corresponding trajectories illustrated in (\Cref{fig:gw_1}a).
The goal of the agent is to navigate from the bottom left to the top right corner.
Due to the tabular nature of the state space, this setting allows a direct visual comparison of the recovered reward functions for the different regularization strategies.

\textbf{Setup.} In order to construct the dataset settings $\cE_{tr}$, we generate a dataset of 3 groups of expert trajectories using a value iteration method on modified versions
of the MDP. The initial and final states of the trajectories are fixed.
We introduce a selection bias into the IRL feature expectation 
matching problem by manipulating the expert preferences to choose different paths. This results in a trajectory dataset with different number of trajectories for each of the three paths chosen by the experts (\Cref{fig:gw_1}a): 40 trajectories for 1st group, 10 trajectories for 2nd group and 1 trajectory for the 3rd group.

\textbf{Baselines.} Throughout this section, we compare the proposed regularization to both 
non-regularized and regularized versions of the \textsc{DeepMaxEnt} algorithm. In particular, we use an L2 penalties of the reward feature weights $\varphi(s)$ and a Lipschitz smoothness penalty \citep{yoshida2017spectral} as baseline regularization strategies.

\textbf{Results.} In \Cref{fig:gw_1}, we can observe that both the unregularized MaxEnt IRL
algorithm (ERM) (\Cref{fig:gw_1}b) and $L_2$-regularized MaxEnt IRL algorithm (ERM-$L_2$) (\Cref{fig:gw_1}c) exhibit overfitting to the expert datasets and partially fail to recover a meaningful reward and respective policy. In contrast, the IRM-regularized version recovers a shaped reward function which takes the different
optimal paths into account in a manner which demonstrates an invariance to the setting index $e$. In particular, increasing the regularization strength $\lambda$ improves the reward significantly (\Cref{fig:gw_1}d - \Cref{fig:gw_1}e). It is easy to see that the CI-regularized reward can more straightforwardly be used in a setting where the dynamics of the MDP might be modified, e.g. when obstacles are introduced.





\vspace{-.2cm}
\subsection{Adversarial setting}
\vspace{-.2cm}

\label{sec:exp_dual}

For the second experiment, we perform experiments in large state spaces, which require the use of \Cref{algo:ci-airl}. Our primary goal is to investigate how well the recovered reward function allows the elicitation of a policy under change of dynamics. To do so, we first learn a reward function using 
the adversarial training procedure and then retrain a policy from scratch using the recovered 
reward as training signal.

\begin{figure}[htp]
    \centering
    \includegraphics{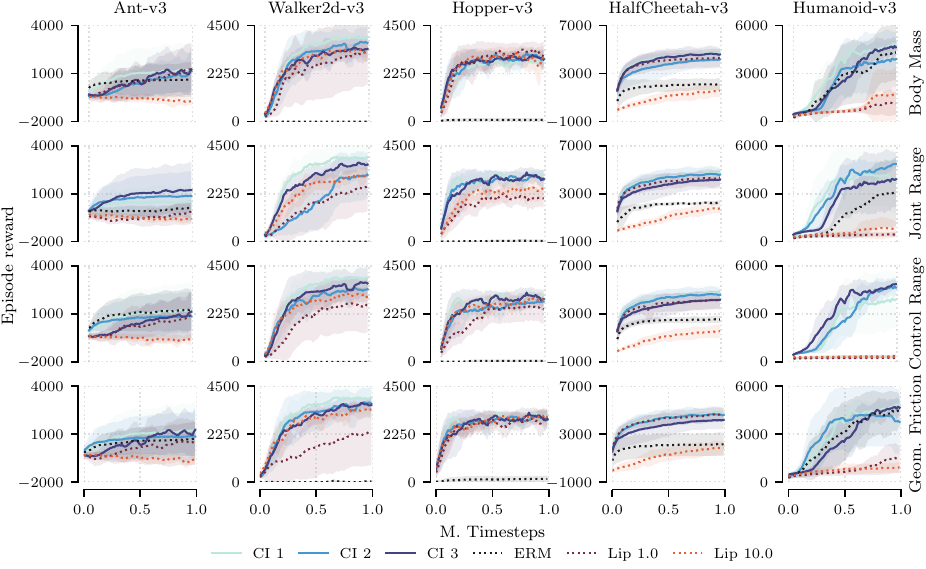}\hfill
    \caption{Comparison of SAC policy performance w.r.t. ground truth reward when trained on inferred reward functions. Every row depicts a different type of dynamics perturbation for the five \textsc{MuJoCo} tasks as described in \cref{sec:exp_dual}. Here, AIRL is chosen as the baseline algorithm. The variants correspond to the unregularized baseline: \textsc{erm}, 
    Lipschitz regularization: \textsc{lip} and three best CI regularization parameters \textsc{ci}.}
    \label{fig:pert_2}
\end{figure}


\textbf{Setup.} For our experimental setting, we choose a
set of robot locomotion tasks from the \textsc{MuJoCo} \citep{todorov2012mujoco} suite.
We generate the demonstration datasets by using pretrained soft-actor-critic (SAC) policies
from the \textsc{stable-baselines3} repository \footnote{https://huggingface.co/sb3}. In order to diversify 
the demonstrations, we perturb the policies using a structured noise approach:
the optimal policy action is perturbed with Gaussian noise at every timestep with a 
probability $p=0.3$ of the noise being applied.
We have used 10 expert trajectories for every environment in all the experiments performed in this section. 
The reward function is obtained by using a number of different discriminator functions corresponding to variations of the $f$-divergence objective.
In order to assess the quality of the recovered reward, we retrain policies on the recovered reward under distribution shift of the dynamics realized by perturbing 
physical parameters of the simulation. Specifically, we apply Gaussian noise to four parameters of the \textsc{MuJoCo} simulation: the body mass, joint range and actuator control range  of the robot as well as the contact friction coefficient of the simulated surface. We evaluate
the behaviour of the algorithms for a variety of perturbation magnitudes.\footnote{Additional experimental details are provided in \cref{sec:appC}}
\begin{table*}[b!]
\caption{Policy rollout results using ground truth reward for perturbed MuJoCo environments after being trained for 1M timesteps using the rewards recovered from the different discriminators in \cref{sec:airlexp2}.
Here, the \emph{body mass} parameter is perturbed with a noise magnitude of $\varepsilon=0.2$. The results are averaged over 10 rollouts and obtained by training the model using five different random seeds.} 
\label{tbl:disc_comp_mujoco}
\vspace*{1ex}
\scriptsize
\centering
\begin{tabular}{llllll}
\toprule
Environment & \textsc{Ant-v3} & \textsc{Walker2d-v3} & \textsc{Hopper-v3} & \textsc{HalfCheetah-v3} & \textsc{Humanoid-v3} \\
\midrule
Expert & 3168.49$_{\pm1715.68}$ & 3565.33$_{\pm527.40}$ & 3119.54$_{\pm524.36}$ & 4340.61$_{\pm2020.14}$ &4774.17$_{\pm2063.52}$ \\ 
\midrule
AIRL (ERM) & 580.78$_{\pm1048.73}$ & -3.29$_{\pm0.71}$ & 77.64$_{\pm88.27}$ & 2046.29$_{\pm460.98}$ & 4451.74$_{\pm1759.31}$ \\
AIRL (Lip) & 1194.04$_{\pm1583.08}$ & 3388.48$_{\pm1586.45}$ & \textbf{3382.91$_{\pm234.02}$} & 4388.94$_{\pm726.69}$ & 1788.85$_{\pm1643.00}$ \\
AIRL (CI) & \textbf{1880.42$_{\pm935.15}$} & \textbf{4162.70$_{\pm517.13}$} & 3334.91$_{\pm221.80}$ & \textbf{4477.97$_{\pm532.72}$} & \textbf{5107.54$_{\pm119.31}$} \\
\midrule
GAIL (ERM) & -746.31$_{\pm468.03}$ & 328.44$_{\pm66.02}$  & 1637.50$_{\pm1419.59}$ & 886.25$_{\pm404.82}$ & 122.62$_{\pm71.53}$ \\
GAIL (Lip) & 220.97$_{\pm524.83}$ & 553.36$_{\pm277.24}$ & 1832.39$_{\pm832.32}$ & 1403.77$_{\pm1282.75}$ & 77.24$_{\pm4.66}$ \\
GAIL (CI) & \textbf{230.43$_{\pm565.68}$} & \textbf{1172.57$_{\pm539.86}$} & \textbf{2636.65$_{\pm1114.94}$} & \textbf{2365.55$_{\pm1679.64}$} & \textbf{549.63$_{\pm1692.08}$} \\
\midrule
			MEIRL (ERM) & -66.66$_{\pm112.03}$ & 169.11$_{\pm344.87}$ & 3.22$_{\pm0.22}$ & -177.39$_{\pm211.43}$ & 55.99$_{\pm3.45}$ \\
			MEIRL (Lip) & -365.41$_{\pm143.70}$ & 917.14$_{\pm132.05}$ & 1045.40$_{\pm54.76}$ & -335.10$_{\pm84.66}$ & 1001.49$_{\pm1889.60}$ \\
			MEIRL (CI) &  \textbf{153.43$_{\pm1134.46}$} &  \textbf{2520.24$_{\pm994.27}$} &  \textbf{2351.07$_{\pm679.37}$} &  \textbf{1371.59$_{\pm1469.01}$} &  \textbf{3099.51$_{\pm2411.21}$} \\

\bottomrule
\end{tabular}
\end{table*}

\textbf{Baselines.} Throughout this section, we use three different 
adversarial IRL algorithms as baselines for reward learning: (i) AIRL \citep{fu2017learning}, an adversarial IRL approach which relies on a structured discriminator to recover a stationary reward function (ii) MEIRL, an adaptation of the maximum entropy IRL algorithm for large state spaces \emph{without} an importance sampling estimator \citep{ni2021f} and (iii) GAIL \citep{ho2016generative}, an imitation learning where we extract the unshaped discriminator as a reward function for policy learning. All baselines use the SAC \citep{haarnoja2018soft} algorithm as a forward RL agent for purposes of sample efficiency.

\textbf{Results.} \Cref{fig:pert_2} depicts the results of a 
$SAC$ agent trained using the reward function recovered by the AIRL baseline
algorithm and two regularization strategies: (i) the Lipschitz smoothness regularizer \citep{gulrajani2017improved} controlled 
by the penalty coefficient $\lambda_L$ and the proposed regularization in \cref{eq:def}, controlled by the penalty coefficient $\lambda_I$.
We evaluate the models on combinations of regularization coefficients 
drawn from the following sets:
$\lambda_I \in \{0.0, 0.1, 1.0, 10.0, 100.0\}$ and $\lambda_L \in \{0.0, 1.0, 10.0\}$
and pick the three best performing $\lambda_I$ coefficients for every environment.
We can observe that applying the causal invariance penalty leads to superior performance 
compared to using non-regularized or Lipschitz regularized rewards. The choice 
of the $\lambda_I$ hyperparameter is problem dependent -- we report three best performing 
variants per-environment.

\textbf{Impact of adversarial algorithm variation.} We evaluate our method 
on a number of variants of the adversarial imitation learning algorithms based on
$f$-divergence minimization, as outlined above. The results are presented in \cref{tbl:disc_comp_mujoco}.
We report the ground truth reward performance attained using an SAC agent trained using 
the recovered rewards after 1 million timesteps. The regularization coefficients 
are selected on a per-environment basis. The full results tables are provided
in \cref{sec:AppCtbl}. We observe improved cumulative ground truth reward metrics for all 
three algorithms when compared to both the unregularized (ERM) and Lipschitz regularized (Lip)
baselines. The improvement is most pronounced in the \textsc{Ant}, \textsc{Walker2d} and 
\textsc{Humanoid} environments, which are of higher state-space dimensionality.
\label{sec:airlexp2}


\textbf{Perturbation magnitude.} In this experiment, we investigate the policy ground truth performance as a function
of the perturbation magnitude applied to the physical parameters of the environment dynamics.
In \cref{fig:pertcomp}, we observe that using the CI-penalty improves the ground truth episode reward performance of the trained policies
for a large spectrum of perturbations for 3 out of 5 environments and does not suffer a performance penalty for the other two.
\footnote{We provide a number of additional evaluations for both the tractable and approximate settings in \cref{sec:appB}}

\begin{figure}[htp]
    \centering
    \includegraphics{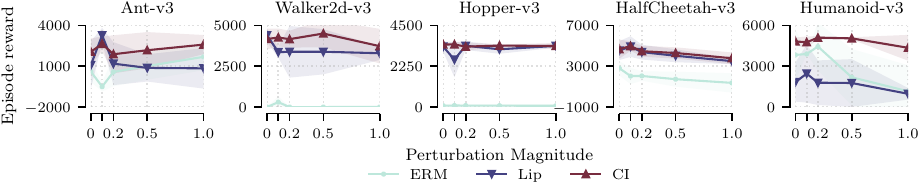}\hfill
    \caption{Comparison of SAC policy performance w.r.t. ground truth reward when trained on recovered reward functions as a function of \emph{perturbation magnitude} of the \emph{body mass} parameter. Here, AIRL is chosen as the baseline algorithm. \textsc{erm} denotes the unregularized baseline, \textsc{lip} the best Lipschitz regularization hyperparameters per environment and \textsc{ci} the best causal invariance regularization hyperperameters.}
    \label{fig:pertcomp}
\end{figure}



	

	



\section{Related work}

\textbf{Invariance and causality in RL. } Following the introduction of 
causally invariant methods for supervised and representation learning
\citep{peters2015icp, heinzedeml2017icp, arjovsky2019irm, ahuja2020irmg}, 
the concept of causal invariance has been used in a number of reinforcement learning works. 
Invariant causal prediction has been utilized in \citep{zhang2020invariant}
to learn model invariant state abstractions in a multiple MDP setting with a shared latent space.
Invariant policy optimization \citep{sonar2021invariant} uses
the IRM games \citep{ahuja2020irmg} formulation to learn policies invariant to certain 
domain variations.
The authors of \citep{de2019causal} tackle the problem of causal confusion in 
imitation learning by making use of causal structure of demonstrations.
Causal imitation learning under temporally correlated noise has been studied in \citep{swamy2022causal}.
In the offline RL setting without access to environment interactions, \citep{bica2021invariant}
propose to use the invariance principle for policy generalization.
To the best of our knowledge, our algorithm is the first proposed method
to use invariant causal prediction in the context of inverse
reinforcement learning with the primary purpose focused on recovery of reward functions and their subsequent deployment for downstream purposes.  

\textbf{Learning from diverse demonstrations.} 
\citeauthor{li2017infogail} investigate the issue of imitation learning from diverse experts through the lens of identifying latent factors of variation.
The authors of \citep{zolna2019task} propose a model which also tackles the issue 
of spurious correlations being absorbed from expert data. However,
their focus is on visual features in a solely imitation learning setting
as opposed to our approach, which recovers reward functions that
perform favorably in a transfer setting.
Diverse demonstrations have been studied in both the context of adversarial imitation learning under assumptions on latent variables in \citep{tangkaratt2020variational} as well as offline imitation learning \citep{kim2021demodice} under assumptions of explicit access to a dataset of suboptimal demonstrations. The authors of \citep{haug2020inverse} propose to use suboptimal demonstrations to derive an additional supervision signal by way of matching optimality profiles for preference learning. 

\textbf{Comparison to other regularization techniques.} 
The divergence-based dual objective used in this work admits a number of regularization strategies, which result
as a the restriction of the critic function class $\cF$.
A commonly used regularization is the Lipschitz smoothness penalty \citep{gulrajani2017improved, yoshida2017spectral} which restricts the class of functions $\mathcal{F}$ in \cref{eq:dual_fenchel} to the class of Lipschitz smooth functions. Contrary to methods which penalizes the estimated gradient norm w.r.t. the input, the causal invariance penalty penalized the norm of the gradient w.r.t. to the predictor parameters of the model which vary across settings $\cE_{tr}$. \citeauthor{kim2018imitation} restrict the function class $\cF$ to belong to an RKHS space, resulting in an imitation learning method based on the Maximum Mean Discrepancy (MMD) metric.
The authors of \citep{bashiri2021distributionally} present a distributionally 
robust imitation learning method which generalizes the maximum entropy IRL robustness properties from logistic loss to arbitrary losses. In comparison, our method tackles the issue of learning rewards as opposed to imitation policy learning and allows for improved generalization due to the properties of the causal invariance penalty.

\vspace{-.25cm}
\section{Conclusion}
\vspace{-.25cm}
In this work, we have presented a regularization objective for inverse reinforcement learning
to recover reward functions which are robust 
to spurious correlations present in expert datasets which feature diverse demonstrations.
The robustness manifests itself as improved policy performance 
in a transfer setting in both the maximum entropy 
IRL case based on feature expectation matching as well as the adversarial setting. 

\paragraph{Limitations and future work.} The hyperparameter $\lambda_I$ is strongly dependent on the data and environment -- finding a automatic tuning procedure would overcome the main limitation in terms of the applicability of the method. Currently, the proposed method relies on a linear formulation of the causal invariance penalty. 
Successor methods of \citep{arjovsky2019irm} which introduce a nonlinear formulation \citep{lu2022nonlinear} or include a larger 
class of distribution shifts \citep{rothenhausler2021anchor} could be considered 
in order to improve the overly conservative nature \citep{ahuja2021invariance} of causally invariant features. 
\bibliography{ciirl}


\newpage




\appendix
\section{Proposition proofs}
\label{sec:appA}

\propone*
\begin{proof}
The result directly follows from the definition of the primal problem. 
\begin{align*}
\nabla_{\psi|\psi=1.0} \mathcal{L}^e(\psi, \varphi) &= \nabla_\psi \left(\mathbb{E}_{\xi\in\mathcal{D}_e}\left[\log \left(\frac{1}{Z_{\psi,\varphi} }\exp(\psi^T\varphi(\xi))\right)\right]\right)\\
  &= \mathbb{E}_{\xi\in\mathcal{D}_e}\left[ \nabla_\psi (\psi^T\varphi(\xi) -\log Z_{\psi,\varphi})   \right]\\
  &= \mathbb{E}_{\cD_E^e}[\varphi(\xi)] - \nabla_\psi\log Z_{\psi,\varphi} \\
  &= \mathbb{E}_{\cD_E^e}[\varphi(\xi)] - \mathbb{E}_{p(\xi|\psi)}[\varphi(\xi))]
  \end{align*}
where we use the moment generating property of the log partition function $\nabla_\psi\log Z_{\psi,\varphi} = \mathbb{E}_{p(\xi|\psi)}[\varphi(\xi)]$.
\end{proof}
\proptwo*
\begin{proof}
We begin by computing the gradient of 
\[
\minim{q}[\mathbb{E}_{\xi\sim\cD_E^e}[g_{\psi,\varphi}(\xi)] - \mathbb{E}_{\xi\sim q}[g_{\psi,\varphi}(\xi)] + D_{KL}(q||p_0)]
\]
w.r.t. to the parameters $\psi$ for setting $e\in\cE_{tr}$:
\begin{align*}
    \nabla_\psi\cL_{dual}(\psi, \varphi, q, e) &= \nabla_\psi (\minim{q}[\mathbb{E}_{\xi\sim\cD_E^e}[g_{\psi,\varphi}(\xi)] - \mathbb{E}_{\xi\sim q}[g_{\psi,\varphi}(\xi)] + D_{KL}(q||p_0)])\\
    &\overset{(1)}{=}\nabla_\psi (\minim{q}[\mathbb{E}_{\xi\sim\cD_E}[g_{\psi,\varphi}(\xi)] - \mathbb{E}_{\xi\sim q}[g_{\psi,\varphi}(\xi)] + \cH(q)])\\
    &\overset{(2)}{=}\minim{q} \left[\nabla_\psi \left(\mathbb{E}_{\xi\sim\cD_E}[g_{\psi,\varphi}(\xi)] - \mathbb{E}_{\xi\sim q}[g_{\psi,\varphi}(\xi)] + \cH(q) \right)\right]\\
    &\overset{(3)}{=}\minim{q} [\mathbb{E}_{\xi\sim\cD_E^e}[\nabla_\psi g_{\psi,\varphi}(\xi)] - \mathbb{E}_{\xi\sim q}[\nabla_\psi g_{\psi,\varphi}(\xi)]]\\
\end{align*}
where we use: (1) the fact that we assume the base measure $p_0$ to be the Lebesgue measure (or count measure in the discrete case), i.e. $p_0(\xi) = 1$, (2) the envelope theorem and (3) the fact that $q$ does not directly depend on $\psi$ and thus, $\nabla_\psi \cH(q) = 0$.
We can rewrite the second expectation using the importance sampling trick:
\begin{align*}
    \minim{q} \mathbb{E}_{x\sim\cD_E}[\varphi(\xi)] - \mathbb{E}_{x\sim q}[\varphi(\xi)] = \minim{q} \mathbb{E}_{x\sim\cD_E}[\varphi(\xi)] - \mathbb{E}_{x\sim p(\xi|\psi, \varphi)}\left[\frac{q(\xi)}{p(\xi|\psi, \varphi)} \varphi(\xi)\right]
\end{align*}

By definition of the distribution matching problem and strict concavity w.r.t. $q$, the optimum is attained when $q(\xi) = p(\xi|\psi, \varphi)$, i.e. when the importance sampling ratio is 1. 

\end{proof}



\newpage

\newpage
\section{Additional results}
\label{sec:appB}
In this section, we present additional experimental evidence to support the claims made in the main text.

\subsection{Additional gridworld results}

\Cref{fig:gw2} depicts an additional experimental setting using \cref{algo:fm_irl}. Here, we choose 5 sets 
of trajectories which solve a horizontal navigation problem illustrated in \cref{fig:gw2}a. We compare this to 
a same set of baselines as described in \cref{sec:gwexp} with the addition of a spectral norm \citep{yoshida2017spectral} regularizer which imposes the Lipschitz smoothness penalty on the reward representation. We motivate this choice by the fact that Lipschitz smoothness is a successful regularization techniques in the approximate setting. We can observe a similar
pattern to the one reported in \cref{sec:gwexp}, where the ERM MaxEnt baseline overfits
to reward to the observed trajectories (\cref{fig:gw2}(b,c)). The Lipschitz regularization 
in \cref{fig:gw2}(d) provides a more succinct reward representation but fails to capture
the horizontal reward gradient which is indicative of the shared intent of the experts to move right in the direction of the goal.
The CI penalty with both regularization strengths (\cref{fig:gw2}(e,f)) recovers this aspect of the ground truth reward.

\begin{figure}[htp]
    \centering
    \includegraphics[width=\textwidth]{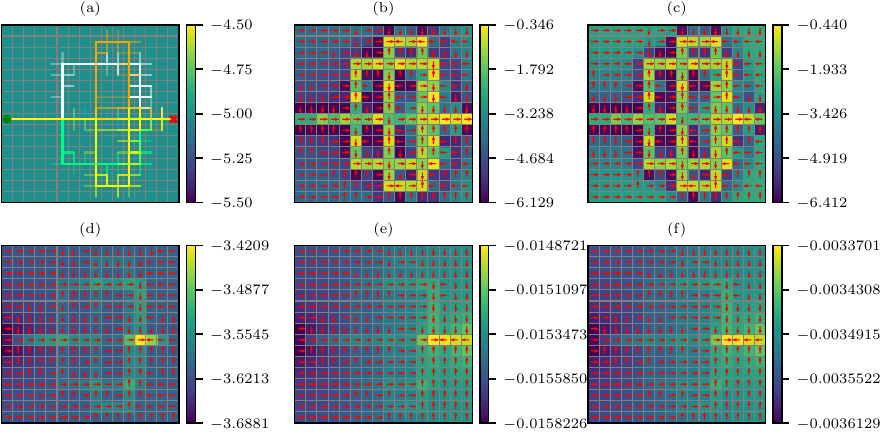}

    \caption{Feature matching reward recovery on a gridworld environment. (a) expert trajectory datasets: 
    every color represents a modality containing 50 trajectories (b) MaxEnt IRL ERM baseline (c) MaxEnt IRL ERM baseline with L2 regularization coefficent $\lambda_{L2} = 1e-3$ (d) 
    MaxEnt IRL baseline with spectral norm (Lipschitz) regularization (e) MaxEnt IRL with CI penalty, $\lambda_{I}=0.1$, (f) MaxEnt IRL with CI penalty, $\lambda_{I}=0.5$}
    \label{fig:gw2}
\end{figure}

\subsection{Adversarial training results}

For reference, in \cref{fig:pert_advtra}, we present the results of the adversarial training procedure used to 
recover the reward functions for the experiments in \cref{sec:airlexp2}. We can observe that 
when used to regularize the discriminator in an adversarial setting, the CI penalty does not produce a significant regularization effect as has been established in the transfer setting. 

\begin{figure}[htp]
    \centering
    \includegraphics{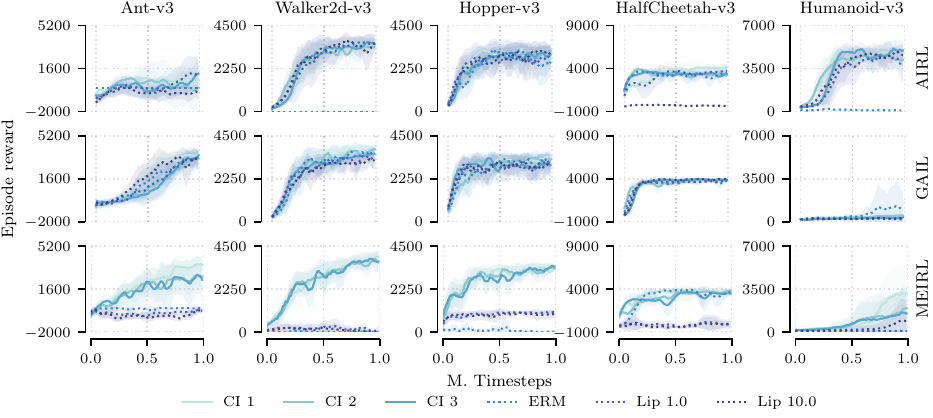}\hfill
    \caption{Comparison of SAC policy training performance w.r.t. ground truth reward when trained using the adversarial training procedures. Every row depicts a different type of algorithm used 
    to train the policies.}
    \label{fig:pert_advtra}
\end{figure}

\subsection{Alternative discriminator training}

Similar to the training dynamics reported in \cref{fig:pert_2} for the AIRL discriminator structure, we provide 
the results for the other two algorithms -- GAIL in \cref{fig:pert_2_gail} and MEIRL in \cref{fig:pert_2_meirl}.
We observe that for GAIL, the regularization is beneficial in all settings except for \textsc{HalfCheetah} where 
it is outperformed by the Lipschitz regularized baseline. For the MEIRL setting, we also observe a significant improvement 
with the exception of the \textsc{Ant} environment, where all algorithm fail to achieve movement using 
the recovered reward. 

\begin{figure}[htp]
    \centering
    \includegraphics{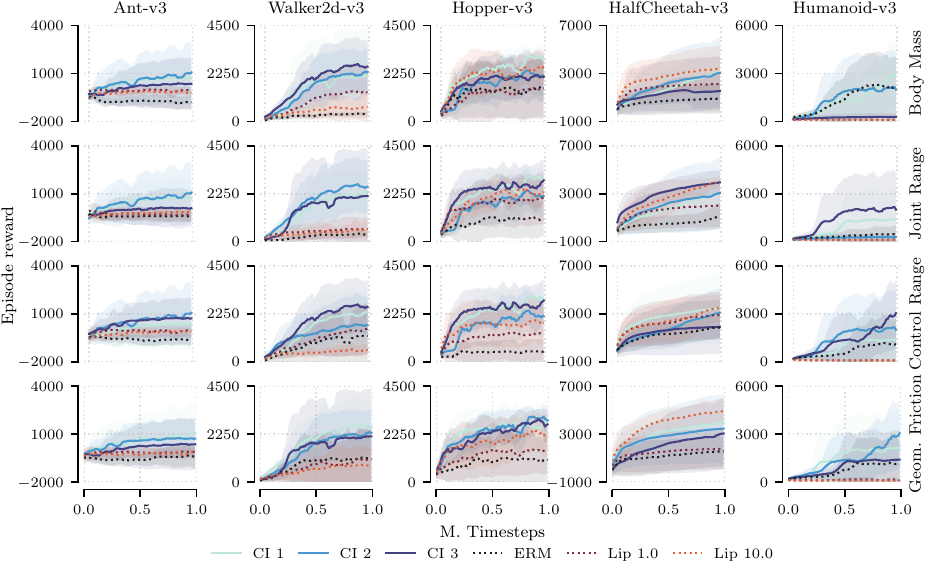}\hfill
    \caption{Comparison of SAC policy training performance w.r.t. ground truth reward when trained on recovered reward functions. Every row depicts a different type of dynamics perturbation for the five \textsc{MuJoCo} tasks as described in \cref{sec:exp_dual}. Here, GAIL is chosen as the baseline.}
    \label{fig:pert_2_gail}
\end{figure}
\begin{figure}[htp]
    \centering
    \includegraphics{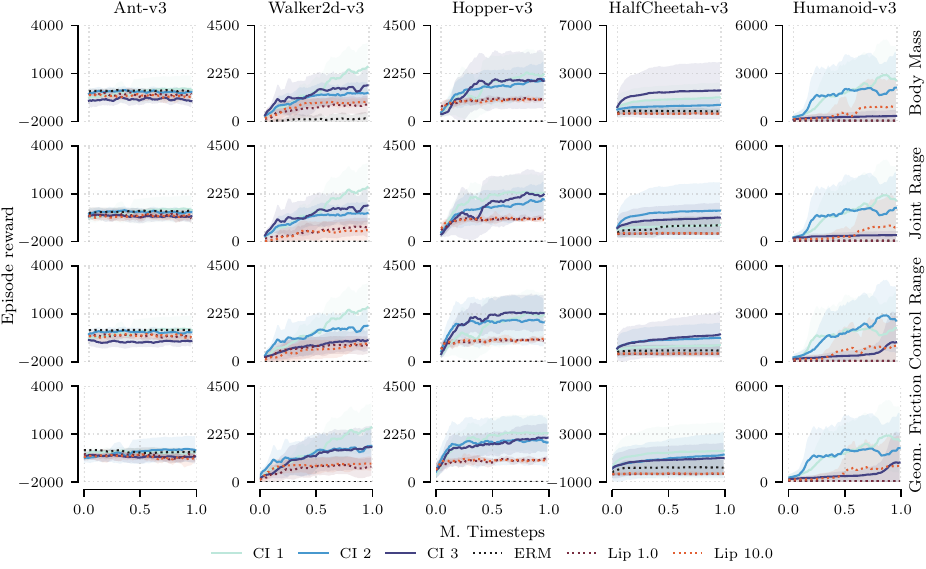}\hfill
    \caption{Comparison of SAC policy training performance w.r.t. ground truth reward when trained on recovered reward functions. Every row depicts a different type of dynamics perturbation for the five \textsc{MuJoCo} tasks as described in \cref{sec:exp_dual}. Here, MEIRL is chosen as the baseline.}
    \label{fig:pert_2_meirl}
\end{figure}

\begin{figure}[htp]
    \centering
    \includegraphics{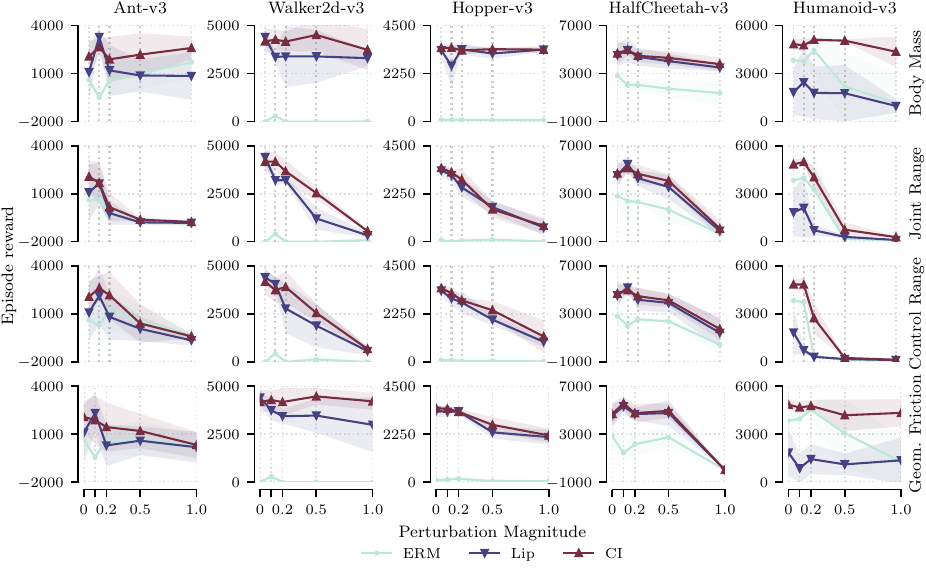}\hfill
    \caption{Comparison of SAC policy performance w.r.t. ground truth reward when trained on recovered reward functions as a function of the perturbation strength. Every row depicts a different type of dynamics perturbation for the five \textsc{MuJoCo} tasks as described in \cref{sec:exp_dual}. Here, AIRL is chosen as the baseline.}
    \label{fig:pertcomp_full}
\end{figure}

\begin{table*}
\caption{Policy rollout results using ground truth reward for perturbed MuJoCo environments after being trained for 1M timesteps using the rewards recovered from the different discriminators in \cref{sec:airlexp2}.
Here, the \emph{joint range} parameter is perturbed with a noise magnitude of $\varepsilon=0.2$. The results are averaged over 10 rollouts and obtained by training the model using five different random seeds.} 
\label{tbl:jnt_range}
\vspace*{1ex}
\scriptsize
\centering
\begin{tabular}{llllll}
\toprule
Environment & \textsc{Ant-v3} & \textsc{Walker2d-v3} & \textsc{Hopper-v3} & \textsc{HalfCheetah-v3} & \textsc{Humanoid-v3} \\
\midrule
Expert & 3168.49$_{\pm1715.68}$ & 3565.33$_{\pm527.40}$ & 3119.54$_{\pm524.36}$ & 4340.61$_{\pm2020.14}$ &4774.17$_{\pm2063.52}$ \\ 
\midrule
			AIRL (ERM) & -18.73$_{\pm255.23}$ & -3.50$_{\pm1.66}$ & 48.82$_{\pm76.24}$ & 2310.93$_{\pm118.75}$ & 3261.17$_{\pm2272.68}$ \\
			AIRL (Lip) & -213.89$_{\pm738.75}$ & 3202.09$_{\pm185.98}$ & 2544.28$_{\pm445.86}$ & 4293.70$_{\pm666.33}$ & 710.88$_{\pm356.04}$ \\
			AIRL (CI) & 155.62$_{\pm875.01}$ & 3670.21$_{\pm599.00}$ & 2906.77$_{\pm490.33}$ & 4653.89$_{\pm762.09}$ & 4022.43$_{\pm671.37}$ \\
\midrule
			GAIL (ERM) & -486.05$_{\pm388.06}$ & 359.23$_{\pm254.85}$ & 1047.76$_{\pm871.98}$ & 1160.78$_{\pm1134.26}$ & 508.86$_{\pm601.99}$ \\
			GAIL (Lip) & -208.97$_{\pm252.99}$ & 577.40$_{\pm550.86}$ & 2339.87$_{\pm465.97}$ & 4168.13$_{\pm472.22}$ & 122.49$_{\pm82.43}$ \\
			GAIL (CI) & 1021.26$_{\pm1845.56}$ & 3479.99$_{\pm1242.39}$ & 2976.49$_{\pm417.33}$ & 5581.43$_{\pm1442.39}$ & 2170.96$_{\pm2425.51}$ \\
\midrule
			MEIRL (ERM) & -57.30$_{\pm93.23}$ & -3.69$_{\pm0.18}$ & 3.17$_{\pm0.52}$ & 347.55$_{\pm790.54}$ & 54.70$_{\pm1.92}$ \\
			MEIRL (Lip) & -554.25$_{\pm82.05}$ & 622.45$_{\pm464.25}$ & 1136.95$_{\pm209.97}$ & -297.84$_{\pm71.23}$ & 1014.42$_{\pm1915.24}$ \\
			MEIRL (CI) & -337.17$_{\pm1310.57}$ & 2292.94$_{\pm1521.43}$ & 2800.37$_{\pm666.09}$ & 1650.50$_{\pm1683.38}$ & 2935.90$_{\pm2262.13}$ \\

\bottomrule
\end{tabular}
\end{table*}

\subsection{Tables of results}
\label{sec:AppCtbl}
In addition to the results reported on the body mass parameter of the \textsc{MuJoCo} simulation, we provide the results tables for other three perturbation parameters: 
\emph{joint range} in \cref{tbl:jnt_range}, \emph{actuator control range} in \cref{tbl:ctrl_range} and \emph{geometry friction} in \cref{tbl:geom_friction}.
We observe a similar effect in terms of the reward generalization properties as 
described in the main text.

\begin{table*}
\caption{Policy rollout results using ground truth reward for perturbed MuJoCo environments after being trained for 1M timesteps using the rewards recovered from the different discriminators in \cref{sec:airlexp2}.
Here, the \emph{actuator control range} parameter is perturbed with a noise magnitude of $\varepsilon=0.2$. The results are averaged over 10 rollouts and obtained by training the model using five different random seeds.} 
\label{tbl:ctrl_range}
\vspace*{1ex}
\scriptsize
\centering
\begin{tabular}{llllll}
\toprule
Environment & \textsc{Ant-v3} & \textsc{Walker2d-v3} & \textsc{Hopper-v3} & \textsc{HalfCheetah-v3} & \textsc{Humanoid-v3} \\
\midrule
Expert & 3168.49$_{\pm1715.68}$ & 3565.33$_{\pm527.40}$ & 3119.54$_{\pm524.36}$ & 4340.61$_{\pm2020.14}$ &4774.17$_{\pm2063.52}$ \\ 
\midrule
			AIRL (ERM) & 1279.87$_{\pm1281.66}$ & -0.41$_{\pm3.07}$ & 37.62$_{\pm62.71}$ & 2536.90$_{\pm212.27}$ & 357.19$_{\pm135.66}$ \\
			AIRL (Lip) & 809.60$_{\pm1425.76}$ & 2779.73$_{\pm1303.91}$ & 2784.28$_{\pm510.50}$ & 4175.49$_{\pm918.92}$ & 312.24$_{\pm90.05}$ \\
			AIRL (CI) & 2166.58$_{\pm1471.70}$ & 3897.58$_{\pm831.24}$ & 2884.34$_{\pm130.60}$ & 4470.17$_{\pm731.81}$ & 2730.60$_{\pm982.13}$ \\

\midrule
			GAIL (ERM) & -641.93$_{\pm284.65}$ & 1180.57$_{\pm1413.21}$ & 491.27$_{\pm565.63}$ & 1862.67$_{\pm1026.74}$ & 1219.94$_{\pm1784.26}$ \\
			GAIL (Lip) & -92.74$_{\pm363.44}$ & 1672.06$_{\pm1263.95}$ & 2028.07$_{\pm1004.96}$ & 3638.51$_{\pm1164.42}$ & 93.01$_{\pm24.01}$ \\
			GAIL (CI) & 2486.00$_{\pm2078.11}$ & 2660.74$_{\pm866.36}$ & 2985.44$_{\pm280.70}$ & 3979.90$_{\pm2494.31}$ & 2986.70$_{\pm2389.78}$ \\
\midrule
			MEIRL (ERM) & -10.63$_{\pm3.35}$ & -3.83$_{\pm0.45}$ & 3.17$_{\pm0.27}$ & -36.66$_{\pm489.57}$ & 58.40$_{\pm0.15}$ \\
			MEIRL (Lip) & -411.65$_{\pm244.20}$ & 832.80$_{\pm311.20}$ & 1073.22$_{\pm138.00}$ & -261.74$_{\pm164.78}$ & 1064.41$_{\pm2011.91}$ \\
			MEIRL (CI) & 133.50$_{\pm969.39}$ & 2286.80$_{\pm1040.59}$ & 2551.59$_{\pm1131.76}$ & 3303.82$_{\pm2332.89}$ & 3058.78$_{\pm2286.41}$ \\

\bottomrule
\end{tabular}
\end{table*}

\begin{table*}
\caption{Policy rollout results using ground truth reward for perturbed MuJoCo environments after being trained for 1M timesteps using the rewards recovered from the different discriminators in \cref{sec:airlexp2}.
Here, the \emph{geometry friction} parameter is perturbed with a noise magnitude of $\varepsilon=0.2$. The results are averaged over 10 rollouts and obtained by training the model using five different random seeds.} 
\label{tbl:geom_friction}
\vspace*{1ex}
\scriptsize
\centering
\begin{tabular}{llllll}
\toprule
Environment & \textsc{Ant-v3} & \textsc{Walker2d-v3} & \textsc{Hopper-v3} & \textsc{HalfCheetah-v3} & \textsc{Humanoid-v3} \\
\midrule
Expert & 3168.49$_{\pm1715.68}$ & 3565.33$_{\pm527.40}$ & 3119.54$_{\pm524.36}$ & 4340.61$_{\pm2020.14}$ &4774.17$_{\pm2063.52}$ \\ 
\midrule
			AIRL (ERM) & 603.00$_{\pm909.86}$ & -3.87$_{\pm0.44}$ & 149.17$_{\pm151.96}$ & 2141.85$_{\pm942.19}$ & 4507.23$_{\pm659.04}$ \\
			AIRL (Lip) & 283.73$_{\pm1294.15}$ & 3429.17$_{\pm372.29}$ & 3311.25$_{\pm128.82}$ & 4659.44$_{\pm533.32}$ & 1432.57$_{\pm952.87}$ \\
			AIRL (CI) & 1434.01$_{\pm1530.65}$ & 4167.15$_{\pm721.26}$ & 3288.86$_{\pm149.76}$ & 4737.72$_{\pm749.52}$ & 4756.93$_{\pm368.15}$ \\
\midrule
GAIL (ERM) & -421.27$_{\pm752.40}$ & 910.12$_{\pm951.80}$ & 939.05$_{\pm959.38}$ & 1563.17$_{\pm1245.51}$ & 871.61$_{\pm1002.21}$ \\
GAIL (Lip) & -172.69$_{\pm196.92}$ & 1065.02$_{\pm1672.12}$ & 2541.08$_{\pm906.67}$ & 4795.59$_{\pm1018.65}$ & 89.51$_{\pm16.87}$ \\
GAIL (CI) & 1148.32$_{\pm1938.45}$ & 2395.43$_{\pm1282.70}$ & 3068.00$_{\pm459.50}$ & 4037.08$_{\pm983.32}$ & 3385.58$_{\pm2279.28}$ \\
\midrule
MEIRL (ERM) & -103.10$_{\pm188.90}$ & -3.43$_{\pm0.15}$ & 3.36$_{\pm0.19}$ & 191.23$_{\pm630.81}$ & 56.41$_{\pm2.24}$ \\
MEIRL (Lip) & -252.29$_{\pm184.32}$ & 865.84$_{\pm308.25}$ & 1128.52$_{\pm125.06}$ & -284.92$_{\pm204.61}$ & 991.70$_{\pm1871.72}$ \\
MEIRL (CI) & -191.76$_{\pm912.46}$ & 2546.37$_{\pm1073.22}$ & 2425.38$_{\pm1070.78}$ & 1724.44$_{\pm2057.07}$ & 2155.55$_{\pm2281.06}$ \\

\bottomrule
\end{tabular}
\end{table*}

\subsection{Lunar Lander environment}

In addition to the robotic locomotion setting, we have performed a set of reward learning experiments in the context of the \textsc{LunarLander} environment \citep{towers_gymnasium_2023}. Similarly to the robotic locomotion experiments described in \cref{sec:exp_dual}, 10 demonstrations 
were gathered using a pre-trained SAC baseline and a structured noise approach was used to achieve diversity of demonstrations. The reward was trained using the AIRL baseline algorithm and the same set of regularization coefficients $\lambda_I \in \{0.0, 0.1, 1.0, 10.0, 100.0\}$ and $\lambda_L \in \{0.0, 1.0, 10.0\}$ as in \cref{sec:exp_dual}. Subsequently, the reward was used to train an SAC policy under dynamics perturbations on three different parameters: gravity, wind power and turbulence power with noise magnitudes $\eta=1.0$ for the gravity and wind power parameters and $\eta=0.5$ for the turbulence power parameter.

\Cref{fig:pert_ll} depicts the resulting policy training performance. We can observe that using the CI regularization with regularization coefficient $\lambda=100.0$ allows us to train a policy which considerably improves upon both the unregularized baseline (ERM) and the Lipschitz smoothness regularized baselines (Lip) both in terms of faster convergence and asymptotic performance measured using the ground truth reward.

\begin{figure}[htp]
    \centering
    \includegraphics{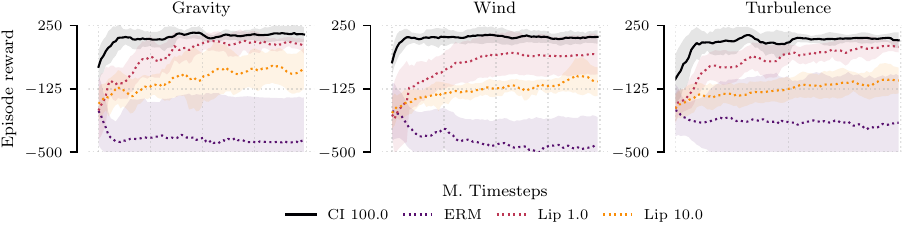}\hfill
    \caption{Comparison of SAC policy training performance w.r.t. ground truth reward when trained using the recovered reward functions. Here, AIRL is chosen as the baseline reward learning algorithm. Every column depicts a different type of dynamics perturbation used during training.}
    \label{fig:pert_ll}
\end{figure}

\section{Model architecture and training details}
\label{sec:appC}
\textbf{Gridworld. } For the gridworld experiments, we use a DeepMaxEnt \citep{wulfmeier2015maximum} formulation of the IRL
problem. The state features are parametrized by a 2-layer MLP with a 1-dimensional hidden layer.
We use RMSProp as an optimizer with a learning rate of $1e-3$.

\textbf{MuJoCo and LunarLander. } For the adversarial learning experiments, we use an actor and critic network with two hidden layers of size 256 and a discriminator network with two hidden layers of size 128. We use Adam \citep{kingma2014adam} as the optimizer with an initial learning rate of $\eta=3e-4$.
We use the gradient penalty strategy described in \citep{gulrajani2017improved} to impose
the Lipschitz smoothness constraint on the discriminator networks in \cref{sec:exp_dual}.
We perform one update of the discriminator network for every update of the actor-critic networks.
We use an extension of the cleanRL \citep{huang2022cleanrl} library for all the experiments. 
to train the policy networks both using the ground truth reward as well as during adversarial
training and using the recovered reward for the transfer experiments.

\subsection{Regularization strength}
The causal invariance (CI) penalty term used in \cref{eq:me_primal} and \cref{eq:me_dual}
bears resemblance to the Lipschitz smoothness gradient penalty \citep{gulrajani2017improved}
used as the main regularized baseline througout the experiments presented in this work. 
The magnitude of both penalty types is controlled by the respective regularization coefficients $\lambda_L$ and $\lambda_I$. The regularization strength of the CI gradient penalty controls the deviation of the learned solution from the optimum on the respective training setting in the convex case \citep{arjovsky2019irm}.
In contrast to penalizing the smoothness of the function with respect to the input, the tuning of the regularization is largely dependent on the diversity of training settings. Identifying a data-driven strategy for regularization tuning is a challenge we defer to future work.

\section{Derivation of the penalty term for feature matching IRL}
\label{sec:appD}
To apply the proposed regularization to the feature matching problem described in \cref{algo:fm_irl}, we need to derive an explicit gradient term. We will do so here.
The gradient of the per-environment log-likelihood loss $\mathcal{L}_{MLE}(\psi, \varphi, e) = \sum_{\xi\in\mathcal{D}_e} \log p(\xi|\psi,\varphi)$ w.r.t.
to $\psi$ is computed as follows: 
\begin{align*}
	\mathcal{L}_{MLE} = \sum_{\xi\in\mathcal{D}_e} \log p(\xi|\psi,\varphi) &= \sum_{\xi\in\mathcal{D}_e} \log (\exp(\psi^T\varphi(\xi))) - \log Z_{\psi,\varphi} = \sum_{\xi\in\mathcal{D}_e}\psi^T\varphi(\xi) - \log Z_{\psi,\varphi}
\end{align*}
Differentiating w.r.t. $\psi$ yields:
\begin{align*}
	\frac{\partial \cL_{MLE}}{\partial\psi} &= \mathbb{E}_{\mathcal{D}_e}[\varphi(\xi)] - \frac{1}{Z_{\psi,\varphi}} \int \exp(\psi^T\varphi(\xi))\varphi(\xi)d\xi\\
					&= \mathbb{E}_{\cD_E}[\varphi(\xi)] - \mathbb{E}_{p(\xi|\psi)}[\varphi(\xi)]
\end{align*}
The gradient penalty term from \Cref{eq:maxentpenalty} with respect to the features $\varphi$ is derived as follows:
\begin{equation*}
	\nabla _{\varphi }\left\| \nabla _{\psi|\psi=1.0} \cL^{e}\left( r\left( \psi,\varphi\right) \right) \right\| ^{2} 
	=\frac{ \partial ||\frac{\partial \cL^{e}\left( r\left( \psi ,\varphi \right) \cdot \right) }{\partial \psi}|_{\psi=1.0}||^2}{\partial \varphi }
\end{equation*}
We employ the chain rule:
\begin{equation*}
	\frac{\partial \cL^{e}\left( r\left( \psi,\varphi \right) \right)}{\partial\psi}=\frac{\partial \cL^{e}\left( r\left( \psi,\varphi \right) \right)}{\partial r}\cdot \frac{\partial \left( r\left( \psi,\varphi \right) \right)}{\partial \psi}=\frac{\partial \cL^{e}\left( r\left( \psi,\varphi \right) \right)}{\partial r}\cdot \varphi
\end{equation*}
where the last equality holds because we assume a linear reward with respect to 
the features $\varphi$: $ r\left( \psi,\varphi \right)=\psi^T\varphi$. 
In \cref{sec:problem_setting} we showed that : 
\begin{equation*}
	\frac{\partial \cL^{e}\left( r\left( \psi,\varphi \right) \right)}{\partial r} = \mathbb{E}_{\cD_E}[\varphi(\xi)] -\mathbb{E}_{p(\xi|\psi)}[\varphi(\xi)]
\end{equation*}
where $\mathbb{E}_{\cD_E}[\varphi(\xi)]$ are the trajectory feature statistics of the expert and $\mathbb{E}_\pi\left[\varphi(\xi)\right]$ are the trajectory
feature statistics of the imitation policy. For the sake of simplicity we define:
$C:= \mathbb{E}_{\cD_E}[\varphi(\xi)] -\mathbb{E}_{p(\xi|\psi)}[\varphi(\xi))$ which is independent of $\varphi$: Then:
\begin{equation*}
	\frac{\partial \cL^{e}\left( r\left( \psi,\varphi \right) \right)}{\partial\psi} = C\varphi
\end{equation*}
We obtain the CI penalty for the feature matching maximum entropy IRL case as the following term:
\begin{align*}
\nabla _{\varphi }\left\| \nabla _{\psi|\psi=1.0}\cL^{e}\left( r\left( \psi,\varphi\right) \right) \right\| ^{2} 
	&=\frac{ \partial ||\frac{\partial \cL^{e}\left( r\left( \psi ,\varphi \right) \cdot \right) }{\partial \psi}|_{\psi|\psi=1.0}||^2}{\partial \varphi } = \frac{\partial \left\| C\varphi \right\| ^{2}}{\partial \varphi}\\
	&=\frac{\partial \left[ C\varphi \right] ^{T}\left[ C\varphi\right] }{\partial \varphi }\\
	&=C^{T}C\frac{\partial \varphi ^{T}\varphi }{\partial \varphi}=2\left\| C\right\| ^{2}\varphi = 2||\rho_{E}-\mathbb{E}\left[ \rho_{\psi}\right]||^2\varphi
\end{align*}

\end{document}